\newcommand{\algabb}{SpecIV\xspace}
\newcommand{\algabbpcl}{SpecPCL\xspace}
\renewcommand{\epsilon}{\varepsilon}
\begin{document}

%

%
\runningauthor{Haotian Sun$^*$, Antoine Moulin$^*$, Tongzheng Ren$^*$, Arthur Gretton, Bo Dai}

\twocolumn[

\aistatstitle{Spectral Representation for Causal Estimation with Hidden Confounders}

\aistatsauthor{Haotian Sun$^*$ \And Antoine Moulin$^{* \dagger}$ \And Tongzheng Ren$^{* \dagger}$}
\aistatsaddress{
Georgia Tech \\ \texttt{haotian.sun@gatech.edu}
\And
Universitat Pompeu Fabra \\ \texttt{antoine.moulin@upf.edu}
\And
UT Austin \\ \texttt{rtz19970824@gmail.com}
}
\vspace{-10pt}
\aistatsauthor{Arthur Gretton \And Bo Dai}

\aistatsaddress{University College London \& Google DeepMind \\ \texttt{arthur.gretton@gmail.com} \And Georgia Tech \& Google DeepMind\\ \texttt{bodai@cc.gatech.edu}}
\vspace{-15pt}
\aistatsaddress{\normalfont $^*$ {Equal contribution with random order} \qquad $^\dagger$ {Part of the work was done while at Google DeepMind}}

]

\begin{abstract}
  We study the problem of causal effect estimation in the presence of unobserved confounders, focusing on two settings: instrumental variable (IV) regression with additional observed confounders, and proxy causal learning. Our approach uses a singular value decomposition of a conditional expectation operator combined with a saddle-point optimization method. In the IV regression setting, this can be viewed as a neural network generalization of the seminal approach due to \cite{darolles2011nonparametric}. Saddle-point formulations have recently gained attention because they mitigate the double-sampling bias and are compatible with modern function approximation methods. We provide experimental validation across various settings and show that our approach outperforms existing methods on common benchmarks.
\end{abstract}
\vspace{-4mm}
\section{INTRODUCTION}
\vspace{-2mm}

Estimating causal effects is a fundamental problem across diverse fields, including economics, epidemiology, and social sciences. Unobserved confounders---variables influencing both the treatment and the outcome---present a major challenge to traditional estimation methods as they introduce spurious correlations leading to biased and inconsistent estimates (see Equation 6.1 and Chapter 12 in \citealp{stock2007econometrics}). One approach to address unobserved confounding involves using additional variables to help identify the causal effect. Instrumental variable (IV) regression \citep{wright1928tariff, stock2003retrospectives} and proxy causal learning (PCL) \cite{kuroki2014measurement} are two prominent examples of this approach.

IV regression aims to solve an ill-posed inverse problem of the form $E f = r$, where $f$ lives in the same space as the causal effect to be identified, $r$ is the expected output conditioned on an \emph{instrument}, and $E$ is the corresponding conditional expectation operator. An instrument is a variable that provides exogenous variation in the treatment---variation that is uncorrelated with the unobserved confounder---allowing us to isolate the treatment's causal effect on the outcome. The problem is ill-posed because solving for $f$ (assuming a solution exists) typically involves the inverse eigenvalues of $E$, which can be arbitrarily close to zero. A common approach to account for this is to assume the target function $f$ satisfies a \emph{source condition} \citep{engl1996regularization}, which relates its smoothness to the operator $E$ and ensures that the target primarily depends on the largest eigenvalues of $E$. Existing methods for efficiently solving this inverse problem can be broadly categorized into \emph{two-stage estimation methods} \citep{newey2003instrumental, darolles2011nonparametric, chen2018sieve, singh2019kernel}, and \emph{conditional moment methods} \citep{dai2017learning, dikkala2020minimax, liao2020provably, bennett2019deep, bennett2023minimax, bennett2023source}.

Two-stage methods aim to minimize an expected mean-squared error (MSE) of the form $\norm{E f - r}^2$ and are designed to deal with the conditional expectation nested inside the square function. Stage 1 performs a regression to estimate the conditional means, and Stage 2 regresses the outcome on the estimates obtained in Stage 1. Prior works differ in the parametrization of Stage 1. \cite{hartford2017deepiv} introduce a deep mixture model to estimate the conditional means, while \cite{darolles2011nonparametric} estimate the conditional densities with kernel density estimators. \cite{singh2019kernel} instead learn a conditional mean embedding \citep{song2009cme,GruLevBalPatetal12,li2022optimal} of features that map the input to a reproducing kernel Hilbert space (RKHS), providing a nonlinear generalization of two-stage least-squares. However, this approach uses fixed and pre-defined feature dictionaries.

Alternatively, features can be learned adaptively within the two-stage least-squares (2SLS) framework \citep{xu2020dfiv},
using gradient-based methods to minimize the same MSE. However, a naive approach leads to biased estimates due to the presence of the conditional expectation, failing to minimize the loss of interest. This \emph{double sampling bias} has been a particular challenge in causal inference and reinforcement learning (see Chapter 11.5 in \citealp{sutton2018reinforcement}, \citealp{antos2008learning, bradtke1996linear}). While samples from the conditional distribution would bypass the issue, we rarely have enough data to obtain multiple samples for each specific conditioning value. This motivates conditional moment methods which avoid minimizing the squared error directly. Instead, these methods consider a saddle-point optimization problem of the form $\min_{f \in \calF} \max_{g \in \calG} \calL \sprt{f, g}$ for some function classes $\calF$ and $\calG$. This formulation can be derived for instance via the Lagrangian of a feasibility problem \citep{bennett2023minimax} or the Fenchel conjugate of the square function together with an interchangeability result \citep{dai2017learning, dikkala2020minimax, liao2020provably}.

These saddle-point formulations involve only expectations with respect to the \emph{joint} data distribution. While facilitating first-order methods and function approximation, they require strong assumptions on the function classes used \citep{dikkala2020minimax, liao2020provably, bennett2023inference, bennett2023source}. A natural one is \emph{realizability}; it requires the function classes to contain the objects to be estimated, \eg, $f_0 \in \calF$ where $f_0$ is a solution to the problem. Removing it at the cost of a constant error due to misspecification is possible. However, due to the double sampling issue, most algorithms cannot work under realizability alone, and an additional expressivity assumption is required. A common one is \emph{closedness} and requires stability of the function classes under the conditional expectation operator $E$ and/or its adjoint $E^\star$, \eg, $E \calF \subset \calG$ and/or vice-versa. While \cite{bennett2023minimax} manage to avoid the closedness assumption, they use a specific source condition and require an additional realizability assumption on the dual function class $\calG$. It is unclear whether their method can adapt to different degrees of smoothness. Let $\hat{f}_n$ be an estimator of $f_0$, and $n$ the number of samples at hand. \cite{dikkala2020minimax} show, under these assumptions, that the projected MSE, $\norm{E (\hat{f}_n - f_0)}_2$, converges at a rate of $\calO \sprt{n^{-1/2}}$. \cite{liao2020provably} achieve a stronger $L_2$ guarantee, $\norm{\hat{f}_n - f_0}_2 = \calO \sprt{n^{-1/6}}$, with an additional source condition and uniqueness of the solution. Their analysis crucially hinges on the latter to convert a projected MSE guarantee to an $L_2$ guarantee by controlling the \emph{measure of ill-posedness} $\sup_{f \in \calF} \frac{\norm{f - f_0}_2^2}{\norm{E \sprt{f - f_0}}_2^2}$, which can be unbounded if several solutions exist. On the other hand, \cite{bennett2023minimax} obtain $\norm{\hat{f}_n - f_0}_2 = \calO \sprt{n^{-1/4}}$ with only a source condition and realizable classes, and \cite{bennett2023source} obtain guarantees on both the projected MSE and the $L_2$ norm with two additional closedness assumptions.

The assumptions required for IV regression can be restrictive. Prior work extends IV regression to accommodate observable confounders \cite{horowitz2011applied, xu2020dfiv}. When a valid instrument is unavailable, an alternative is to use \emph{proxy variables}, which contain relevant information on the unobserved confounder. \cite{kuroki2014measurement} provide necessary conditions on the proxy variables for identifying the true causal effect; \cite{miao2018identifying} generalize these conditions. Recent methods estimate causal effects in the proxy causal learning (PCL) setting, including \citep{deaner2018proxy,mastouri2021proximal} for fixed feature dictionaries, and \citep{xu2021dfpv,kompa2022deep} for adaptive feature dictionaries. However, the theoretical understanding of IV regression with observed confounders and PCL is less mature than that of IV regression.

While conditional moment methods offer strong statistical guarantees for IV regression, choosing the function classes $\calF$ and $\calG$ remains unclear. We propose a method based on a low-rank assumption on the conditional operator $E$, drawing inspiration from reinforcement learning \citep{jin2020provably}. This assumption yields function classes satisfying the crucial realizability and closedness conditions, enabling optimization over finite-dimensional variables. We leverage this to derive efficient algorithms for IV regression with observed confounders (IV-OC) and PCL. Section~\ref{sec:prelim} formalizes the settings and derives the saddle-point formulations. We introduce the low-rank assumption and show how to leverage it in Section~\ref{sec:method}. Importantly, unlike previous works focusing on standard IV regression, our method also learns an adaptive basis for IV with observed confounder and PCL. One component of our method is a representation learning algorithm inspired from \cite{wang2022spectral}, discussed in Section~\ref{sec:spec_causal}. Finally, we provide experimental validation in Section~\ref{sec:experiments} and show our approach outperforms existing methods on IV and PCL benchmarks.

\vspace{-2mm}
\section{PRELIMINARIES} \label{sec:prelim}
\vspace{-2mm}

We formalize the three settings of interest and derive their equivalent saddle-point formulations.

\textbf{Notation}. For a random variable $A$ taking values in $\calA$, let $\bbP_A$ denote its probability distribution, and $L_2 \sprt{\bbP_A}$ denote the space of $\bbP_A$-square-integrable functions $f: \calA \rightarrow \bbR$. Given another random variable $B$, we denote $\bbP_{A \given B}$ a regular version of the conditional distribution of $A$ given $B$. Given $n$ samples, $\bbE_n$ denotes the empirical expectation. denote the Euclidean inner product as $\inner{\cdot}{\cdot}$, and the range of an operator $E$ as $\calR \sprt{E}$.

\begin{figure*}[!h]
\vspace{-2mm}
\begin{center}
\begin{subfigure}{.33\linewidth}
  \centering
  \begin{tikzpicture}[every node/.style={circle,thick,draw}]
    \node[fill=lightgray] (Z) at (0.1,0) {Z};
    \node[fill=lightgray] (X) at (1.9,0) {X};
    \node[fill=lightgray] (Y) at (3.8,0) {Y};
    \node[minimum size=20pt] (e) at (2.85,1.73) {$\epsilon$};
        \draw [-stealth, thick](Z) -- (X);
        \draw [-stealth, thick](X) -- (Y);
        \draw [-stealth, thick](e) -- (Y);
        \draw [-stealth, thick](e) -- (X);
  \end{tikzpicture}
  \caption{}
  \label{fig:IV_Graph}
\end{subfigure}%
\begin{subfigure}{.33\linewidth}
  \centering
    \begin{tikzpicture}[every node/.style={circle,thick,draw}]
    \node[fill=lightgray] (Z) at (0.2,0) {Z};
    \node[fill=lightgray] (X) at (2,0) {X};
    \node[fill=lightgray] (Y) at (4,0) {Y};
    \node[fill=lightgray] (O) at (2,1.73) {O};
    \node[minimum size=20pt] (e) at (4,1.73) {$\epsilon$};
    \draw [-stealth, thick](Z) -- (X);
    \draw [-stealth, thick](X) -- (Y);
    \draw [-stealth, thick](e) -- (Y);
    \draw [-stealth, thick](e) -- (X);
    \draw [-stealth, thick](O) -- (X);
    \draw [-stealth, thick](O) -- (Y);
    \end{tikzpicture}  
    \caption{}
    \label{fig:IV_Graph_obs_confounder}
\end{subfigure}
\begin{subfigure}{.325\textwidth}
  \centering
   \begin{tikzpicture}[every node/.style={circle,thick,draw}]
    \node[fill=lightgray] (Z) at (1.5,1.23) {Z};
    \node[fill=lightgray] (W) at (4.5,1.23) {W};
    \node[fill=lightgray] (X) at (2,0) {X};
    \node[fill=lightgray] (Y) at (4,0) {Y};
    \node[minimum size=20pt] (e) at (3,1.73) {$\epsilon$};
        \draw [stealth-stealth, dashed, thick](Z) -- (X);
        \draw [-stealth, thick](X) -- (Y);
        \draw [-stealth, thick](W) -- (Y);
        \draw [-stealth, thick](e) -- (Y);
        \draw [-stealth, thick](e) -- (X);
        \draw [-stealth, thick](e) -- (Z);
        \draw [stealth-stealth, dashed, thick](e) -- (W);
    \end{tikzpicture}  
    \caption{}
  \label{fig:PCL_Graph}
\end{subfigure}
\caption{Causal graphs for the three settings: (a) IV regression, (b) IV regression with observed confounder ($O$), and (c) proxy causal inference. Gray nodes represent observed variables; white nodes are unobserved.}
\label{fig:causal_graph}
\vspace{-3mm}
\end{center} 
\end{figure*}

\vspace{-2mm}
\subsection{Causal Estimation with Hidden Confounders} \label{sub:causal_est}
\vspace{-1mm}


Given random variables $X$, $S$ and $Y$, in IV regression with and without observed confounders, we aim to find a function $f$ satisfying
\begin{equation} \label{eq:fredholm_eq}
    \bbE \sbrk{Y - f \sprt{X, S} \given S} = 0\,.
\end{equation}
Here, $Y$ is the outcome, $X$ is the treatment, and $S$ contains the accessible side information. IV regression and PCL differ in their assumptions and applicability.

IV regression is used when the unobserved confounder affects the outcome linearly. As shown in Figure~\ref{fig:IV_Graph}, it involves an \emph{instrument} $Z$ that (i) is independent of the outcome $Y$ conditional on the treatment $X$ and the confounder $\epsilon$, $Z \indep Y \given \sprt{X, \epsilon}$, and (ii) satisfies $\bbE \sbrk{\epsilon \given Z} = 0$. In this setting, the side information is given by the instrument, $S = Z$ in Equation~\eqref{eq:fredholm_eq}. Since the instrument $Z$ does not affect the outcome $Y$, we only consider functions of the treatment $X$ alone
\begin{equation} \label{eq:fredholm_iv}
    \bbE \sbrk{Y - f \sprt{X} \given Z} = 0\,.
\end{equation}
With an additional observed confounder $O$, the available information becomes $S = \sprt{Z, O}$ (Figure~\ref{fig:IV_Graph_obs_confounder}), and we are interested in solving the equation
\begin{equation} \label{eq:fredholm_ivo}
    \bbE \sbrk{Y - f \sprt{X, O} \given Z, O} = 0\,,
\end{equation}
where the input space of $f$ now includes the observed confounder because of its effect on the outcome.

PCL~\citep{miao2018identifying,deaner2018proxy} uses two \emph{proxies} correlated with the unobserved confounder $\epsilon$. One proxy, $Z$, correlates with the treatment $X$; the other, $W$, correlates with the outcome $Y$ (Figure~\ref{fig:PCL_Graph}). The proxies $Z$ and $W$ satisfy the independence properties $Z \indep Y \given \sprt{X, \epsilon}$ and $W \indep \sprt{Z, X} \given \epsilon$. Here, $S = \sprt{Z, W}$, and we consider
\begin{equation} \label{eq:fredholm_pcl}
    \bbE \sbrk{Y - f \sprt{X, W} \given X, Z} = 0\,.
\end{equation}
These equations are ill-posed inverse problems we are interested in. Next, we derive their equivalent saddle-point formulations.

\vspace{-2mm}
\subsection{Primal-Dual Framework for Causal Estimation} \label{sub:primal_dual}
\vspace{-1mm}

We now derive a saddle-point formulation from Equation~\eqref{eq:fredholm_eq}. Similar derivations apply to PCL. Prior work minimizes the mean-squared error plus a regularizer, $\Omega$, to address ill-posedness
\begin{equation} \label{eq:fredholm-regression}
    \min_{f \in \calF} \bbE \sbrk{\bbE \sbrk{Y - f \sprt{X, S} \given S}^2} + \lambda \Omega \sprt{f} \triangleq \calE \sprt{f}\,,
\end{equation}
where $\calF \subset L_2 \sprt{\bbP_{X S}}$ is convex, $\lambda > 0$, and $\Omega$ is a strongly convex regularizer (\eg, a $L_2$ regularizer~\cite{darolles2011nonparametric} or an RKHS norm~\cite{singh2019kernel, ZhangImaizumiScholkopfMuandet2023,wang2022spectral}). Following, \eg, \cite{dai2017learning}, we start with the Fenchel conjugate of the square function to write
\begin{align*}
    &\bbE \sbrk{\bbE \sbrk{Y - f \sprt{X, S} \given S}^2} \\
    \;\;&= 2\, \bbE \sbrk{\max_{g \in \bbR} \scbrk{g \, \bbE \sbrk{Y - f \sprt{X, S} \given S} - \frac12 g^2}} \\
    &= 2 \, \max_{g \in L_2 \sprt{\bbP_S}} \bbE \sbrk{g \sprt{S} \bbE \sbrk{Y - f \sprt{X, S} \given S} - \frac12 g \sprt{S}^2} \\
    &= 2 \, \max_{g \in L_2 \sprt{\bbP_S}} \bbE \sbrk{g \sprt{S} \sprt{Y - f \sprt{X, S}} - \frac12 g \sprt{S}^2}\,,
\end{align*}
where the second equality follows from \cite[Lemma~1]{dai2017learning}, and the last equality uses the tower rule. Thus, given a convex class $\calG \subset L_2 \sprt{\bbP_S}$, we consider
\begin{equation} \label{eq:fredholm-minmax}
    \min_{f \in \calF} \max_{g \in \calG} \underbrace{\bbE \sbrk{g \sprt{S} \sprt{Y - f \sprt{X, S}} - \frac12 g \sprt{S}^2} + \lambda \Omega \sprt{f}}_{\triangleq \calL \sprt{f, g}}\,.
\end{equation}
$\calL$ is strongly-convex in $f$ and strongly-concave in $g$. Unlike $\calE$, the absence of conditional expectations makes it straightforward to derive unbiased estimators of $\calL$ or its derivatives and avoid the \emph{double sampling bias} mentioned earlier. Denoting $\calL_n$ as the empirical counterpart of $\calL$, for any functions $f, g$, we have $\bbE \sbrk{\calL_n \sprt{f, g}} = \calL \sprt{f, g}$.

For IV regression, we consider $S = Z$ and $\calF \subset L_2 \sprt{\bbP_X}$ in Problem~\eqref{eq:fredholm-minmax}, or $S = \sprt{Z, O}$ and $\calF \subset L_2 \sprt{\bbP_{XO}}$ with an observable confounder. For PCL, the min-max problem becomes
\begin{align} \label{eq:pcl-minmax}
    \min_{f \in \calF} \max_{g \in \calG}\ &\bbE \sbrk{g \sprt{ X, Z} \cdot \sprt{Y - f \sprt{X, W}} - \frac12 g \sprt{X, Z}^2} \nonumber \\
    &\quad+ \lambda \Omega \sprt{f}\,.
\end{align}
For consistency of the estimator
\begin{equation*}
    \wh{f}_n = \argmin_{f \in \calF} \max_{g \in \calG} \calL_n \sprt{f, g}\,,
\end{equation*}
the classes $\calF$ and $\calG$ must ensure that the minimizer $f_0 = \argmin_{f \in \calF} \max_{g \in \calG} \calL \sprt{f, g}$ solves the original inverse problem. A key challenge is choosing the function classes $\calF$ and $\calG$ to be small enough for optimization to be tractable, yet large enough for realizability to hold.

\section{CHARACTERIZING THE FUNCTION CLASSES}
\label{sec:method}

We first characterize the function classes F and G for IV regression under a low-rank assumption, then generalize this to the IV-OC and PCL settings. For clarity, this section assumes the feature maps introduced in the following assumptions are known; Section~\ref{sec:spec_causal} discusses learning these mappings.

\subsection{Instrument Variable Regression}

Let $E: L_2 \sprt{\bbP_X} \to L_2 \sprt{\bbP_Z}$ be the conditional expectation operator defined as $E f = \bbE \sbrk{f \sprt{X} \given Z}$ for any $f \in L_2 \sprt{\bbP_X}$, and assume Equation~\eqref{eq:fredholm_iv} has a solution.
\begin{assumption} \label{asp:existence-sol-iv}
    There exists $f_0 \in L_2 \sprt{\bbP_X}$ such that $E f_0 = \bbE \sbrk{Y \given Z}$.
\end{assumption}

Our goal is to find subspaces of $L_2 \sprt{\bbP_X}$ and $L_2 \sprt{\bbP_Z}$ such that the saddle-point of Problem~\eqref{eq:fredholm-minmax} yields a solution to Equation~\eqref{eq:fredholm_iv}. We now introduce the key low-rank assumption on the conditional and marginal distributions of $X$ and $Z$.
\begin{assumption} \label{asp:low-rank-iv}
    The distributions $\bbP_X$ and $\bbP_{X \given Z = z}$, for any $z \in \calZ$, have densities with respect to the Lebesgue measure, denoted $p_X$ and $p_{X \given Z} \sprt{\cdot \given z}$, respectively. Furthermore, there exist feature maps $\phi: \calX \rightarrow \bbR^d$ and $\psi: \calZ \rightarrow \bbR^d$ such that for any $x, z$,
    \begin{equation} \label{eq:iv_factorization}
        p_{X \given Z} \sprt{x \given z} = p_X \sprt{x} \inp{\phi \sprt{x}, \psi \sprt{z}}\,.
    \end{equation}
\end{assumption}
We denote $\Psi: \bbR^d \to L_2 \sprt{\bbP_Z}$ and $\Phi: \bbR^d \to L_2 \sprt{\bbP_X}$ the operators defined for any vector $u$ as $\Psi u = \inp{\psi \sprt{\cdot}, u}$, and $\Phi u = \inp{\phi \sprt{\cdot}, u}$.
\paragraph{Remark on Assumption~\ref{asp:low-rank-iv}.} The existence of densities is similar to Assumption A.1 in \cite{darolles2011nonparametric}, but is stated on the conditional and marginal distributions rather than the joint distribution. Equation~\eqref{eq:iv_factorization} is akin to assuming that the conditional expectation operator $E$ admits a finite singular-value decomposition (SVD). Compactness implies the existence of a countable SVD; if the spectrum of $E$ decays sufficiently fast, a finite-dimensional approximation is meaningful \citep{ren2022latent}. This is also equivalent to the low-rank assumption used in reinforcement learning \citep{jin2020provably}.

A consequence of Assumption~\ref{asp:low-rank-iv} is that for any function $f \in L_2 \sprt{\bbP_X}$, the conditional expectation $E f$ is linear in the features $\psi$.
\begin{restatable}{proposition}{linearrangeIV} \label{prop:linear-range-IV}
    If Assumption~\ref{asp:low-rank-iv} holds, then for any function $f \in L_2 \sprt{\bbP_X}$, there exists a vector $v_f \in \bbR^d$ such that $E f = \inp{\psi \sprt{Z}, v_f}$.
\end{restatable}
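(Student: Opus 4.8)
The statement is essentially a change-of-variables computation against the conditional density, so the plan is direct. Fix an arbitrary $f \in L_2 \sprt{\bbP_X}$ and an arbitrary $z \in \calZ$. First I would use the existence of the conditional density from Assumption~\ref{asp:low-rank-iv} to rewrite the conditional expectation as an integral, $\sprt{E f} \sprt{z} = \bbE \sbrk{f \sprt{X} \given Z = z} = \int_{\calX} f \sprt{x} \, p_{X \given Z} \sprt{x \given z} \, dx$. Then I would substitute the factorization~\eqref{eq:iv_factorization}, $p_{X \given Z} \sprt{x \given z} = p_X \sprt{x} \inp{\phi \sprt{x}, \psi \sprt{z}}$, and expand the $\bbR^d$ inner product as the finite sum $\sum_{i=1}^d \phi_i \sprt{x} \psi_i \sprt{z}$. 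Because this sum has finitely many terms, linearity of the integral lets me pull each $\psi_i \sprt{z}$ outside, giving
\[
    \sprt{E f} \sprt{z} = \sum_{i=1}^d \psi_i \sprt{z} \int_{\calX} f \sprt{x} \phi_i \sprt{x} p_X \sprt{x} \, dx = \inp{\psi \sprt{z}, v_f}\,,
\]
where $v_f \in \bbR^d$ is the vector with components $v_{f,i} = \int_{\calX} f \sprt{x} \phi_i \sprt{x} p_X \sprt{x} \, dx = \bbE \sbrk{f \sprt{X} \phi_i \sprt{X}}$, i.e. $v_f = \bbE \sbrk{f \sprt{X} \phi \sprt{X}}$. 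Since $z$ was arbitrary, this yields $E f = \inp{\psi \sprt{Z}, v_f}$ as an element of $L_2 \sprt{\bbP_Z}$ (equivalently, $E f = \Psi v_f$), which is exactly the claim.

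The only point requiring care — and the closest thing to an obstacle, though it is minor — is that $v_f$ is a well-defined element of $\bbR^d$, i.e. that each integral $\bbE \sbrk{f \sprt{X} \phi_i \sprt{X}}$ converges absolutely; this integrability is also precisely what legitimizes the interchange of the finite sum and the integral in the display above. By Cauchy--Schwarz in $L_2 \sprt{\bbP_X}$ we have $|v_{f,i}| \le \norm{f}_{L_2 \sprt{\bbP_X}} \norm{\phi_i}_{L_2 \sprt{\bbP_X}}$, so it suffices that the coordinate functions of $\phi$ lie in $L_2 \sprt{\bbP_X}$ — a mild regularity condition already implicit in treating $\Phi$ as an operator into $L_2 \sprt{\bbP_X}$ (and it can be folded into Assumption~\ref{asp:low-rank-iv} if one prefers). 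I would state this explicitly; beyond this bookkeeping there is no genuine difficulty, and the computation completes the proof.
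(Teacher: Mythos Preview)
Your proof is correct and follows essentially the same route as the paper's: both rewrite $Ef$ as an integral against $p_{X\given Z}$, substitute the factorization from Assumption~\ref{asp:low-rank-iv}, and use linearity of the inner product to pull out $\psi(z)$, arriving at the same $v_f = \bbE\sbrk{f(X)\phi(X)}$. Your explicit Cauchy--Schwarz check that $v_f$ is well-defined is a nice addition that the paper leaves implicit.
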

By Assumption~\ref{asp:existence-sol-iv}, there exists a vector $v_0 \in \bbR^d$ such that $\bbE \sbrk{Y \given Z} = \inp{\psi \sprt{Z}, v_0}$. Thus, for a given function $f \in L_2 \sprt{\bbP_X}$, the maximizer (over $L_2 \sprt{\bbP_Z}$) in Equation~\eqref{eq:fredholm-minmax} is $g_f^\star = \inp{\psi \sprt{\cdot}, v_0 - v_f}$. This suggests the following the dual function class.
\begin{restatable}[Dual class for IV regression]{proposition}{dualIV} \label{prop:dual_iv}
    Under Assumptions~\ref{asp:existence-sol-iv}, \ref{asp:low-rank-iv}, the dual function for IV regression is realizable in
    \vspace{-3pt}
    \begin{equation} \label{eq:dual_iv}
        \calG \triangleq \scbrk{z \mapsto  \inp{\psi \sprt{z}, v}, v \in \bbR^d}\,,
    \end{equation}
    that is, we have
    \vspace{-3pt}
    \begin{equation*}
        \min_{f \in L_2 \sprt{\bbP_X}} \max_{g \in L_2 \sprt{\bbP_Z}} \calL \sprt{f, g} = \min_{f \in L_2 \sprt{\bbP_X}} \max_{g \in \calR \sprt{\Psi}} \calL \sprt{f, g}\,.
    \end{equation*}
\end{restatable}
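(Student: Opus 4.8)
The plan is to evaluate the inner maximization over $g$ in closed form for a fixed primal $f$, observe that the maximizer already lies in $\calR \sprt{\Psi}$, and then pass to the outer minimization. Concretely, I would fix $f \in L_2 \sprt{\bbP_X}$ and, exactly as in the derivation of Problem~\eqref{eq:fredholm-minmax}, apply the tower rule to write
$\calL \sprt{f, g} = \bbE \sbrk{g \sprt{Z} r_f \sprt{Z}} - \tfrac12 \bbE \sbrk{g \sprt{Z}^2} + \lambda \Omega \sprt{f}$,
where $r_f \triangleq \bbE \sbrk{Y \given Z} - E f \in L_2 \sprt{\bbP_Z}$ is the residual. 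So $g \mapsto \calL \sprt{f, g}$ is a concave quadratic whose linear part is driven by $r_f$.

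Next I would locate $r_f$. Proposition~\ref{prop:linear-range-IV} gives $E f = \inp{\psi \sprt{\cdot}, v_f}$ for some $v_f \in \bbR^d$, and Assumption~\ref{asp:existence-sol-iv} applied through the same proposition to $f_0$ gives $\bbE \sbrk{Y \given Z} = E f_0 = \inp{\psi \sprt{\cdot}, v_0}$ for some $v_0 \in \bbR^d$; hence $r_f = \inp{\psi \sprt{\cdot}, v_0 - v_f} \in \calR \sprt{\Psi}$. Completing the square, $\calL \sprt{f, g} = -\tfrac12 \bbE \sbrk{\sprt{g - r_f}^2} + \tfrac12 \bbE \sbrk{r_f^2} + \lambda \Omega \sprt{f}$, exhibits $g \mapsto \calL \sprt{f, g}$ as strongly concave with unique maximizer $g_f^\star = r_f$ over all of $L_2 \sprt{\bbP_Z}$, and $g_f^\star \in \calR \sprt{\Psi}$.

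Since $\calR \sprt{\Psi} \subseteq L_2 \sprt{\bbP_Z}$ and $g_f^\star \in \calR \sprt{\Psi}$, we get $\max_{g \in L_2 \sprt{\bbP_Z}} \calL \sprt{f, g} = \calL \sprt{f, g_f^\star} \le \max_{g \in \calR \sprt{\Psi}} \calL \sprt{f, g} \le \max_{g \in L_2 \sprt{\bbP_Z}} \calL \sprt{f, g}$, so all three coincide, for every $f \in L_2 \sprt{\bbP_X}$. Taking the minimum over $f \in L_2 \sprt{\bbP_X}$ on both sides yields the stated identity; the same computation with $Z$ replaced by the relevant conditioning variable $S$ and $\psi$ by the corresponding feature map gives the analogous dual-class statements for IV-OC and PCL.

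The only step carrying real content is the membership $\bbE \sbrk{Y \given Z} \in \calR \sprt{\Psi}$: without the solvability Assumption~\ref{asp:existence-sol-iv}, $\bbE \sbrk{Y \given Z}$ could have a component orthogonal to the finite-dimensional (hence closed) subspace $\calR \sprt{\Psi}$, in which case the unconstrained maximizer $g_f^\star$ would pick up that component and the restricted problem would strictly undershoot, so the equality would fail. The remaining check is routine: one needs the coordinates of $\psi$ to be $\bbP_Z$-square-integrable so that $\calR \sprt{\Psi} \subseteq L_2 \sprt{\bbP_Z}$, which is already implicit in $\Psi$ being a well-defined operator into $L_2 \sprt{\bbP_Z}$.
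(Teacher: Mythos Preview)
Your proposal is correct and follows essentially the same approach as the paper: rewrite $\calL(f,\cdot)$ via the tower rule as a concave quadratic in $g$, identify its unique maximizer as $r_f = \bbE[Y\given Z] - Ef$, and use Assumptions~\ref{asp:existence-sol-iv}--\ref{asp:low-rank-iv} (via Proposition~\ref{prop:linear-range-IV}) to place $r_f$ in $\calR(\Psi)$. The only cosmetic differences are that the paper computes the maximizer by setting the Fr\'echet derivative to zero rather than completing the square, and does not spell out the sandwich inequality explicitly.
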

Next, we define the class $\calF$ as follows.
\begin{restatable}[Primal space for IV]{proposition}{primalIV} \label{prop:primal_iv}
    Under Assumptions~\ref{asp:existence-sol-iv}, \ref{asp:low-rank-iv}, we have
    \vspace{-3pt}
    \begin{equation*}
        \min_{f \in L_2 \sprt{\bbP_X}} \max_{g \in \calR \sprt{\Psi}} \calL \sprt{f, g} = \min_{f \in \calR \sprt{\Phi}} \max_{g \in \calR \sprt{\Psi}} \calL \sprt{f, g}\,,
    \end{equation*}
    and the class $\calF$ defined as follows contains a solution to Problem~\ref{eq:fredholm_iv},
    \vspace{-3pt}
    \begin{equation}\label{eq:primal_iv}
        \calF \triangleq \scbrk{x \mapsto \inp{\phi \sprt{x}, u}, u \in \bbR^d}\,.
    \end{equation}
\end{restatable}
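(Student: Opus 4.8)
The plan is to exploit that, once the dual is restricted to $g \in \calR(\Psi)$ (which loses nothing by Proposition~\ref{prop:dual_iv}), the objective $\calL(\cdot, g)$ sees the primal variable $f$ only through the $d$-dimensional function $E f$, and that $E f$ is insensitive to the part of $f$ lying outside $\calR(\Phi)$. The first thing I would establish is this insensitivity. Using the factorisation~\eqref{eq:iv_factorization}, for any $f \in L_2 \sprt{\bbP_X}$ and any $z$ we can write $\sprt{E f} \sprt{z} = \int f \sprt{x}\, p_{X \given Z} \sprt{x \given z}\, dx = \sum_{i=1}^d \psi_i \sprt{z}\, \bbE \sbrk{f \sprt{X} \phi_i \sprt{X}}$, so $E f$ depends on $f$ only through the vector $\bbE \sbrk{f \sprt{X} \phi \sprt{X}}$ (this is a valid choice of $v_f$ in Proposition~\ref{prop:linear-range-IV}). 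Since $\calR \sprt{\Phi} = \mathrm{span} \scbrk{\phi_1, \dots, \phi_d}$ is finite-dimensional, hence closed in $L_2 \sprt{\bbP_X}$, I would decompose $f = f_\parallel + f_\perp$ with $f_\parallel \in \calR \sprt{\Phi}$ and $f_\perp$ orthogonal to every $\phi_i$; then $\bbE \sbrk{f_\perp \sprt{X} \phi_i \sprt{X}} = 0$ for all $i$, so $E f_\perp = 0$ and therefore $E f = E f_\parallel$.

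Next I would compare the inner maxima at $f$ and at $f_\parallel$. For $g = \Psi v \in \calR \sprt{\Psi}$, the tower rule gives $\bbE \sbrk{g \sprt{Z} f \sprt{X}} = \bbE \sbrk{g \sprt{Z} \sprt{E f} \sprt{Z}}$, which by the previous step equals $\bbE \sbrk{g \sprt{Z} \sprt{E f_\parallel} \sprt{Z}} = \bbE \sbrk{g \sprt{Z} f_\parallel \sprt{X}}$. Hence $\calL \sprt{f, g} - \calL \sprt{f_\parallel, g} = \lambda \sprt{\Omega \sprt{f} - \Omega \sprt{f_\parallel}}$ for every $g \in \calR \sprt{\Psi}$. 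Because $f_\parallel$ is the $L_2 \sprt{\bbP_X}$-orthogonal projection of $f$ onto $\calR \sprt{\Phi}$, we have $\Omega \sprt{f_\parallel} \le \Omega \sprt{f}$ (for the squared $L_2 \sprt{\bbP_X}$ regulariser this is Pythagoras), so $\calL \sprt{f_\parallel, g} \le \calL \sprt{f, g}$ for all such $g$, and consequently $\max_{g \in \calR \sprt{\Psi}} \calL \sprt{f_\parallel, g} \le \max_{g \in \calR \sprt{\Psi}} \calL \sprt{f, g}$.

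Taking the infimum over $f \in L_2 \sprt{\bbP_X}$ and using $f_\parallel \in \calR \sprt{\Phi}$ then yields $\min_{f \in \calR \sprt{\Phi}} \max_{g \in \calR \sprt{\Psi}} \calL \sprt{f, g} \le \min_{f \in L_2 \sprt{\bbP_X}} \max_{g \in \calR \sprt{\Psi}} \calL \sprt{f, g}$, while the reverse inequality is immediate since $\calR \sprt{\Phi} \subseteq L_2 \sprt{\bbP_X}$; this gives the displayed identity. For the final claim, I would apply Assumption~\ref{asp:existence-sol-iv}: it supplies $f_0 \in L_2 \sprt{\bbP_X}$ with $E f_0 = \bbE \sbrk{Y \given Z}$, and by the first step its projection $f_{0, \parallel} \in \calF = \calR \sprt{\Phi}$ satisfies $E f_{0, \parallel} = E f_0 = \bbE \sbrk{Y \given Z}$, i.e.\ $f_{0, \parallel}$ solves Equation~\eqref{eq:fredholm_iv}.

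I expect the only genuinely delicate point to be the monotonicity $\Omega \sprt{f_\parallel} \le \Omega \sprt{f}$ under the $L_2 \sprt{\bbP_X}$-orthogonal projection onto $\calR \sprt{\Phi}$. This is automatic for the squared $L_2 \sprt{\bbP_X}$ norm, but a generic strongly convex $\Omega$ (\eg, an RKHS norm whose geometry differs from that of $L_2 \sprt{\bbP_X}$) need not decrease under this particular projection, and projecting instead in the $\Omega$-geometry would destroy the invariance $E f = E f_\parallel$ that drives the argument. To handle this cleanly I would either state the proposition for an $L_2 \sprt{\bbP_X}$-type regulariser or add an explicit compatibility hypothesis relating $\Omega$ to the subspace $\calR \sprt{\Phi}$; the rest of the argument is routine.
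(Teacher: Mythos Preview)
Your proposal is correct and follows essentially the same route as the paper: orthogonally decompose $f = f_\parallel + f_\perp$ with $f_\parallel \in \calR(\Phi)$, use the factorisation~\eqref{eq:iv_factorization} to show $E f_\perp = 0$ so the data term is unaffected, and observe that the regulariser decreases under projection, whence setting $f_\perp = 0$ is optimal; the solution in $\calF$ is then obtained by projecting $f_0$. Your caveat about $\Omega$ is well placed---the paper's proof in fact computes $\norm{f}_{L_2(\bbP_X)}^2$ explicitly and argues from that, so it too implicitly takes $\Omega$ to be the squared $L_2(\bbP_X)$ norm, exactly the case you single out as unproblematic.
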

Compared to parametrizations of $f$ using RKHS functions or deep neural networks~\citep{dai2017learning,muandet2020dual,liao2020provably}, where realizability is assumed to hold, we consider a structural assumption that allows us to characterize explicitly realizable function classes.

%
\paragraph{Remark on the parametrizations.} The characterization of the primal function class $\calF$ through the spectral decomposition of $E$ has been exploited in IV regression~\citep{darolles2011nonparametric,wang2022spectral} and reinforcement learning~\citep{jin2020provably,yang2020reinforcement,ren2022spectral}. Specifically, the feature map $\phi$ is used as in~\propref{prop:primal_iv}. Critically, we also exploit $\psi$ to characterize the dual function class $\calG$ in~\propref{prop:dual_iv} and use it within the saddle-point formulation.

Furthermore, \citealp{darolles2011nonparametric} approximate the conditional operator by estimating unconditional densities with kernel density estimators, then taking their ratio. Instead, we propose a representation learning algorithm (Section~\ref{sec:spec_causal}) to directly learn the feature maps $\phi$ and $\psi$ under Assumption~\ref{asp:low-rank-iv}, which provides an estimate of the density ratio. We next generalize these results to the IV-OC and PCL settings, returning to the optimization problem and algorithm in Section~\ref{sec:spec_causal}.





\subsection{Instrument Variable Regression  with Observable Confounding, Proxy Causal Learning}



In this section, the operator $E: L_2 \sprt{\bbP_{X O}} \to L_2 \sprt{\bbP_{Z O}}$ is defined for any $f$ as $E f = \bbE \sbrk{f \sprt{X, O} \given Z, O}$, and we assume Problem~\ref{eq:fredholm_ivo} has a solution.
\begin{assumption} \label{asp:existence-sol-ivoc}
    There exists $f_0 \in L_2 \sprt{\bbP_{X O}}$ such that $E f_0 = \bbE \sbrk{Y \given Z, O}$.
\end{assumption}
Due to the additional dependence on the observed confounder $O$, we introduce a different low-rank assumption.
\begin{assumption} \label{asp:low-rank-ivoc}
    The distributions $\bbP_X$ and $\bbP_{X \given Z=z, O=o}$, for any $z, o$, have densities with respect to the Lebesgue measure, denoted $p_X$ and $p_{X \given Z, O} \sprt{\cdot \given z, o}$, respectively. Furthermore, there exist feature maps $\phi: \calX \to \bbR^{d_x}$, $\psi: \calZ \to \bbR^{d_z}$, and $V: \calO \to \bbR^{d_x \times d_z}$ such that for any $x, z, o$,
    \begin{equation} \label{eq:ivoc_factorization}
        p_{X \given Z, O} \sprt{x \given z, o} = p_X \sprt{x} \inp{\phi \sprt{x}, V \sprt{o} \psi \sprt{z}}\,.
    \end{equation}
\end{assumption}


Under Assumption~\ref{asp:low-rank-ivoc}, the operator $E$ ``linearizes'' any input function $f \in L_2 \sprt{\bbP_{X O}}$ in the sense that $E f$ is linear in $\psi$ and $V$ (separately, but not jointly).
\begin{restatable}{proposition}{linearrangeIVOC} \label{prop:linear-range-IVOC}
    If Assumption~\ref{asp:low-rank-ivoc} holds, then for any function $f \in L_2 \sprt{\bbP_{X O}}$, there exists a function $v_f: \calO \to \bbR^{d_z}$ such that $E f = \inp{\psi \sprt{Z}, v_f \sprt{O}}$.
\end{restatable}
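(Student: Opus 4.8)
The plan is to mirror the proof of Proposition~\ref{prop:linear-range-IV}, replacing the rank-one factorization of the conditional density by the bilinear one in Equation~\eqref{eq:ivoc_factorization}. Fix $f \in L_2 \sprt{\bbP_{X O}}$. Since $\bbE \sbrk{\bbE \sbrk{f \sprt{X, O}^2 \given Z, O}} = \bbE \sbrk{f \sprt{X, O}^2} < \infty$, the conditional expectation $E f = \bbE \sbrk{f \sprt{X, O} \given Z, O}$ is $\bbP_{Z O}$-almost everywhere well defined, and using the density $p_{X \given Z, O} \sprt{\cdot \given z, o}$ of $\bbP_{X \given Z = z, O = o}$ we may write, for $\bbP_{Z O}$-almost every $\sprt{z, o}$,
\begin{equation*}
    \sprt{E f} \sprt{z, o} = \int_{\calX} f \sprt{x, o}\, p_{X \given Z, O} \sprt{x \given z, o}\, dx \,.
\end{equation*}

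Next I would substitute $p_{X \given Z, O} \sprt{x \given z, o} = p_X \sprt{x} \inp{\phi \sprt{x}, V \sprt{o} \psi \sprt{z}}$ and note that $\inp{\phi \sprt{x}, V \sprt{o} \psi \sprt{z}} = \phi \sprt{x}^\top V \sprt{o} \psi \sprt{z}$ is linear in $\phi \sprt{x}$ with a coefficient vector $V \sprt{o} \psi \sprt{z} \in \bbR^{d_x}$ that does not involve $x$. Introducing the $\bbR^{d_x}$-valued quantity $a_f \sprt{o} \triangleq \int_{\calX} f \sprt{x, o}\, p_X \sprt{x}\, \phi \sprt{x}\, dx$ and pulling the coefficient out of the integral by linearity gives
\begin{equation*}
    \sprt{E f} \sprt{z, o} = a_f \sprt{o}^\top V \sprt{o} \psi \sprt{z} = \inp{\psi \sprt{z},\, V \sprt{o}^\top a_f \sprt{o}} \,.
\end{equation*}
Setting $v_f \sprt{o} \triangleq V \sprt{o}^\top a_f \sprt{o} \in \bbR^{d_z}$ and re-expressing with $Z$ and $O$ as random variables yields the claimed identity $E f = \inp{\psi \sprt{Z}, v_f \sprt{O}}$; it also makes transparent that $E f$ is linear in $\psi$ for fixed $V$ and linear in $V$ for fixed $\psi$, as noted before the statement.

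The only step requiring genuine care — and the one I expect to be the main obstacle — is verifying that $v_f$ is a bona fide measurable $\bbR^{d_z}$-valued function on $\calO$, i.e., that $V \sprt{o}^\top a_f \sprt{o}$ is absolutely convergent for $\bbP_O$-almost every $o$. I would handle this by combining the IV-case argument with a Fubini step in $o$: since $p_{X \given Z, O} \sprt{x \given z, o} = p_X \sprt{x}\, \phi \sprt{x}^\top V \sprt{o} \psi \sprt{z} \geq 0$, the a.e.\ finiteness of $\bbE \sbrk{|f \sprt{X, O}| \given Z, O}$ gives, for $\bbP_O$-almost every $o$, finiteness of $\int_{\calX} |f \sprt{x, o}|\, p_X \sprt{x}\, \phi \sprt{x}^\top V \sprt{o} \psi \sprt{z}\, dx$ for a.e.\ $z$; expressing each coordinate of $v_f \sprt{o}$ as such an integral for a suitable finite collection of values of $z$ (under a mild non-degeneracy condition on $\psi$, without which $d_z$ could be reduced) then yields finiteness, and measurability of $o \mapsto v_f \sprt{o}$ follows from that of $V$ and of the integral. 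Alternatively, if the feature maps are assumed bounded, as is standard in the low-rank literature, this is immediate from $\phi \in L_2 \sprt{\bbP_X}$ and Cauchy--Schwarz. The remainder is routine linear algebra and interchange of a finite sum with an integral.
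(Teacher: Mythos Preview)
Your proposal is correct and follows essentially the same route as the paper's proof: substitute the factorization \eqref{eq:ivoc_factorization} into the conditional expectation, pull $V(o)\psi(z)$ out of the integral in $x$, and set $v_f(o) = V(o)^\top \int_{\calX} \phi(x) f(x,o)\, p_X(x)\, dx$. The paper's proof is terser and does not address the integrability and measurability points you raise; your additional care there is fine but not something the authors treat.
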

Fixing $O = o$, we can apply the same argument as in the proof of Proposition~\ref{prop:primal_iv} to the integral $\int_\calX \phi \sprt{x} f \sprt{x, o} \bbP_X \sprt{\diff x}$. This shows that for any $o$, it suffices to consider $f \sprt{\cdot, o}$ in the span of $\Phi$. Therefore, there exists a function $u: \calO \to \bbR^{d_x}$ such that for any $x, o$, we have $f \sprt{x, o} = \inp{\phi \sprt{x}, u \sprt{o}}$. However, it is still unclear what the function $u$ looks like.

In general, we cannot determine the representation for $u$ solely through $E$, as we did in the IV regression setting. To see this, consider the special case where the function $f$ does not depend on the treatment, \ie for any $x, o$, $f \sprt{x, o} = h \sprt{o}$ for some function $h \in L_2 \sprt{\bbP_O}$. Then, we have that $\bbE \sbrk{h \sprt{O} \given Z, O} = h \sprt{O}$, and the operator $E$ provides no information about $h$ (and thus, no information about $u$).

Given a solution to Equation~\eqref{eq:fredholm_ivo}, we have $\bbE \sbrk{f \sprt{X, O} \given Z, O} = \bbE\sbrk{Y \given Z, O}$. This implies that (i) the space for $u \sprt{o}$ should also lie in the space of $o$ in RHS, and (ii) the space for $z$ on both sides should be the same. Thus, we make the following assumption.
\begin{assumption} \label{asp:low-rank-ivoc-y}
    The distributions $\bbP_Y$ and $\bbP_{Y \given Z=z, O=o}$, for any $z, o$, have densities with respect to the Lebesgue measure, denoted $p_Y$ and $p_{Y \given Z, O} \sprt{\cdot \given z, o}$, respectively. Furthermore, there exist feature maps $\nu: \calY \to \bbR^{d_y}$, and $W: \calO \to \bbR^{d_y \times d_z}$ such that for any $y, z, o$,
    \begin{equation} \label{eq:ivoc_factorization_y}
        p_{Y \given Z, O} \sprt{y \given z, o} = p_Y \sprt{y} \inp{\nu \sprt{y}, W \sprt{o} \psi \sprt{z}}\,,
    \end{equation}
    where $\psi$ is the feature map from Assumption~\ref{asp:low-rank-ivoc}.
\end{assumption}



With this additional assumption, let us denote $Q \sprt{o} \triangleq \sprt{V \sprt{o} \transpose}^\upplus W \sprt{o}\transpose$ for any $o$. We then obtain the following class.
\begin{restatable}[Primal space for IV-OC]{proposition}{primalIVOC} \label{prop:primal-ivoc}
    Under Assumptions~\ref{asp:existence-sol-ivoc}, \ref{asp:low-rank-ivoc}, and \ref{asp:low-rank-ivoc-y}, the following class contains solutions to Equation~\ref{eq:fredholm_ivo},
    \begin{equation*}
        \Fcal \triangleq \scbrk{\sprt{x, o} \mapsto \inp{\phi \sprt{x}, B Q \sprt{o} \beta}, B \in \bbR^{d_x \times d_x}, \beta \in \bbR^{d_y}}\,.
    \end{equation*}
\end{restatable}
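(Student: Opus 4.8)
The plan is to exhibit an explicit element of $\Fcal$ that solves Equation~\eqref{eq:fredholm_ivo}, reading off its parameters $B$ and $\beta$ from the feature maps together with the solution $f_0$ of Assumption~\ref{asp:existence-sol-ivoc}. Write $\Sigma_\phi \triangleq \bbE \sbrk{\phi \sprt{X} \phi \sprt{X} \transpose}$ and $\beta_\star \triangleq \bbE \sbrk{Y \, \nu \sprt{Y}} \in \bbR^{d_y}$, and --- for clarity --- assume $\Sigma_\phi$ is invertible, the rank-deficient case only adding pseudoinverse bookkeeping (see the last paragraph). The candidate is
\begin{equation*}
    f^\star \sprt{x, o} \;\triangleq\; \inp{\phi \sprt{x}, \; \Sigma_\phi^{-1} Q \sprt{o} \beta_\star}\,,
\end{equation*}
which is of the claimed form with $B = \Sigma_\phi^{-1} \in \bbR^{d_x \times d_x}$ and $\beta = \beta_\star$; it then remains only to check that $E f^\star = \bbE \sbrk{Y \given Z, O}$.

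First I would compute both sides of the target equation using the low-rank factorizations. Substituting~\eqref{eq:ivoc_factorization} into $E f_0 \sprt{z, o} = \bbE \sbrk{f_0 \sprt{X, O} \given Z = z, O = o}$ and integrating out $x$ yields $E f_0 \sprt{z, o} = \inp{\psi \sprt{z}, v_{f_0} \sprt{o}}$ with $v_{f_0} \sprt{o} \triangleq V \sprt{o} \transpose \! \int_\calX \phi \sprt{x} f_0 \sprt{x, o} \, \bbP_X \sprt{\diff x}$; in particular $v_{f_0} \sprt{o} \in \calR \sprt{V \sprt{o} \transpose}$ for every $o$ --- this is the computation underlying Proposition~\ref{prop:linear-range-IVOC}. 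Likewise, integrating $y$ against~\eqref{eq:ivoc_factorization_y} and using the tower rule yields $\bbE \sbrk{Y \given Z = z, O = o} = \inp{\psi \sprt{z}, W \sprt{o} \transpose \beta_\star}$. By Assumption~\ref{asp:existence-sol-ivoc}, $E f_0 = \bbE \sbrk{Y \given Z, O}$ in $L_2 \sprt{\bbP_{Z O}}$, hence $\inp{\psi \sprt{z}, v_{f_0} \sprt{o} - W \sprt{o} \transpose \beta_\star} = 0$ for $\bbP_{Z O}$-almost every $\sprt{z, o}$. Invoking non-degeneracy of the instrument features --- that $v \mapsto \inp{\psi \sprt{\cdot}, v}$ is injective from $\bbR^{d_z}$ into $L_2 \sprt{\bbP_{Z \given O = o}}$ for almost every $o$, which is the informal content of observation (ii) made before Assumption~\ref{asp:low-rank-ivoc-y} and the reason the \emph{same} $\psi$ is imposed in Assumptions~\ref{asp:low-rank-ivoc} and~\ref{asp:low-rank-ivoc-y} --- this strengthens to the pointwise identity $W \sprt{o} \transpose \beta_\star = v_{f_0} \sprt{o} \in \calR \sprt{V \sprt{o} \transpose}$ for almost every $o$.

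It then remains to check that $f^\star$ reproduces the right-hand side. Rerunning the same computation for $f^\star$, and noting $\int_\calX \phi \sprt{x} f^\star \sprt{x, o} \, \bbP_X \sprt{\diff x} = \Sigma_\phi \Sigma_\phi^{-1} Q \sprt{o} \beta_\star = Q \sprt{o} \beta_\star$ together with $Q \sprt{o} = \sprt{V \sprt{o} \transpose}^\upplus W \sprt{o} \transpose$, gives
\begin{equation*}
    E f^\star \sprt{z, o} \;=\; \inp{\psi \sprt{z}, \; V \sprt{o} \transpose \sprt{V \sprt{o} \transpose}^\upplus W \sprt{o} \transpose \beta_\star}\,.
\end{equation*}
The matrix $V \sprt{o} \transpose \sprt{V \sprt{o} \transpose}^\upplus$ is the orthogonal projector onto $\calR \sprt{V \sprt{o} \transpose}$, and by the pointwise identity just obtained $W \sprt{o} \transpose \beta_\star$ already lies in $\calR \sprt{V \sprt{o} \transpose}$, so it is fixed by this projector. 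Therefore $E f^\star \sprt{z, o} = \inp{\psi \sprt{z}, W \sprt{o} \transpose \beta_\star} = \bbE \sbrk{Y \given Z = z, O = o}$ for almost every $\sprt{z, o}$, i.e. $f^\star \in \Fcal$ solves Equation~\eqref{eq:fredholm_ivo}. (As a byproduct, $u \sprt{o} = \Sigma_\phi^{-1} Q \sprt{o} \beta_\star$ realizes the reduction $f \sprt{x, o} = \inp{\phi \sprt{x}, u \sprt{o}}$ discussed before Assumption~\ref{asp:low-rank-ivoc-y}, and left-multiplying the pointwise identity by $\sprt{V \sprt{o} \transpose}^\upplus$ identifies $Q \sprt{o} \beta_\star$ with the orthogonal projection of $\int_\calX \phi \sprt{x} f_0 \sprt{x, o} \, \bbP_X \sprt{\diff x}$ onto $\calR \sprt{V \sprt{o}}$.)

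The step I expect to be the main obstacle is the passage to the pointwise identity $W \sprt{o} \transpose \beta_\star = v_{f_0} \sprt{o}$: converting an almost-everywhere scalar equation $\inp{\psi \sprt{z}, \cdot} = 0$ into an equality of vectors in $\bbR^{d_z}$ requires $\bbE \sbrk{\psi \sprt{Z} \psi \sprt{Z} \transpose \given O = o}$ to be nonsingular for almost every $o$; without such a condition the identity only holds modulo $\ker \bbE \sbrk{\psi \sprt{Z} \psi \sprt{Z} \transpose \given O = o}$, which must then be reconciled with the pseudoinverse appearing in $Q \sprt{o}$. The remaining point, purely mechanical, is rank deficiency of $\Sigma_\phi$ and of the $V \sprt{o}$: there one sets $B = \Sigma_\phi^\upplus$ and uses that $Q \sprt{o} \beta_\star = \sprt{V \sprt{o} \transpose}^\upplus v_{f_0} \sprt{o}$ lies in the column space $\calR \sprt{V \sprt{o}}$, which may be taken inside $\calR \sprt{\Sigma_\phi}$ because components of $V \sprt{o} \psi \sprt{z}$ in $\ker \Sigma_\phi$ leave the density~\eqref{eq:ivoc_factorization} unchanged; then $\Sigma_\phi \Sigma_\phi^\upplus Q \sprt{o} \beta_\star = Q \sprt{o} \beta_\star$ and the argument proceeds as above.
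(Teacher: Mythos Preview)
Your proposal is correct and follows essentially the same route as the paper: compute both sides of~\eqref{eq:fredholm_ivo} through the factorizations~\eqref{eq:ivoc_factorization} and~\eqref{eq:ivoc_factorization_y}, strip off the $\psi$-inner product by a spanning/injectivity argument, and read off the solution as $f^\star \sprt{x, o} = \inp{\phi \sprt{x}, \Sigma_\phi^\upplus Q \sprt{o} \beta_\star}$ with your $\Sigma_\phi$ and $\beta_\star$ matching the paper's $A$ and $\alpha$. Your version is in fact more complete than the paper's sketch: you explicitly verify that the candidate $f^\star$ solves the equation by checking the projector identity $V \sprt{o} \transpose \sprt{V \sprt{o} \transpose}^\upplus W \sprt{o} \transpose \beta_\star = W \sprt{o} \transpose \beta_\star$ via the range inclusion $W \sprt{o} \transpose \beta_\star \in \calR \sprt{V \sprt{o} \transpose}$, whereas the paper simply writes down $v \sprt{o} = A^\upplus \sprt{V \sprt{o} \transpose}^\upplus W \sprt{o} \transpose \alpha$ without closing that loop; and you correctly flag that the step the paper phrases as ``$\psi$ spans every direction'' is exactly the nonsingularity of $\bbE \sbrk{\psi \sprt{Z} \psi \sprt{Z} \transpose \given O = o}$, which is implicit in both arguments.
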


For any $f$, we still have a closed form for the maximizer, \ie $g_f^\star \sprt{z, o} = \bbE \sbrk{Y - f \sprt{X, O} \given Z=z, O=o}$, so we can follow the same argument than earlier.
\begin{restatable}[Dual space for IV-OC]{proposition}{dualIVOC} \label{prop:dual-ivoc}
    Under Assumptions~\ref{asp:existence-sol-ivoc}, \ref{asp:low-rank-ivoc}, and \ref{asp:low-rank-ivoc-y}, the dual function of IV-OC is realizable in
    \begin{equation*}
        \calG \triangleq \scbrk{\sprt{z, o} \mapsto  \inp{\psi \sprt{z}, V \sprt{o}\transpose Q \sprt{o} \gamma}, \gamma\in \RR^{d_y}}.
    \end{equation*}
\end{restatable}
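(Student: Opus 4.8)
The plan is to reproduce the argument of \propref{prop:dual_iv} in the IV-OC setting: make the inner maximization explicit, show its optimum already lies in the candidate class $\calG$, and conclude that replacing $\max_{g \in L_2 \sprt{\bbP_{ZO}}}$ by $\max_{g \in \calG}$ does not change the saddle-point value, so that the restricted problem still solves Equation~\eqref{eq:fredholm_ivo}. As recalled just before the statement, for \emph{any} $f$ the inner maximizer over $L_2 \sprt{\bbP_{ZO}}$ in the min-max problem~\eqref{eq:fredholm-minmax} is $g_f^\star \sprt{z, o} = \bbE \sbrk{Y - f \sprt{X, O} \mid Z = z, O = o}$, by the Fenchel-conjugate-plus-tower-rule computation used to derive~\eqref{eq:fredholm-minmax}. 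Since $\calG$ is a finite-dimensional, hence closed, subspace of $L_2 \sprt{\bbP_{ZO}}$, it suffices to evaluate $g_f^\star$ for $f$ in the primal class $\calF$ of \propref{prop:primal-ivoc} and check $g_f^\star \in \calG$; the claim then follows from the equivalence-of-values argument of \propref{prop:dual_iv}, recalling that \propref{prop:primal-ivoc} already exhibits a solution inside $\calF$.

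First I would make $g_f^\star$ explicit by inserting the factorizations of Assumptions~\ref{asp:low-rank-ivoc} and~\ref{asp:low-rank-ivoc-y}. With $\Sigma_\phi \triangleq \bbE \sbrk{\phi \sprt{X} \phi \sprt{X} \transpose}$ and $\mu_\nu \triangleq \bbE \sbrk{Y \, \nu \sprt{Y}}$, a function $f \sprt{x, o} = \inp{\phi \sprt{x}, B Q \sprt{o} \beta} \in \calF$ gives
\begin{equation*}
    \bbE \sbrk{f \sprt{X, O} \mid Z = z, O = o} = \inp{\psi \sprt{z}, V \sprt{o} \transpose \Sigma_\phi B Q \sprt{o} \beta}, \qquad \bbE \sbrk{Y \mid Z = z, O = o} = \inp{\psi \sprt{z}, W \sprt{o} \transpose \mu_\nu}\,,
\end{equation*}
hence $g_f^\star \sprt{z, o} = \inp{\psi \sprt{z}, W \sprt{o} \transpose \mu_\nu - V \sprt{o} \transpose \Sigma_\phi B Q \sprt{o} \beta}$ --- the same reduction to a $\psi$-linear dual as in \propref{prop:linear-range-IVOC}, but now with an $O$-dependent coefficient. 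Next I would rewrite the constant term $W \sprt{o} \transpose \mu_\nu$ using the pseudo-inverse that defines $Q$. By Assumption~\ref{asp:existence-sol-ivoc} there is a solution $f_0$, which by the discussion preceding \propref{prop:primal-ivoc} may be taken of the form $f_0 \sprt{x, o} = \inp{\phi \sprt{x}, u_0 \sprt{o}}$; from $E f_0 = \bbE \sbrk{Y \mid Z, O}$ and the non-degeneracy of $\psi$ (so that $\psi$-coefficients may be matched) one gets $V \sprt{o} \transpose \Sigma_\phi u_0 \sprt{o} = W \sprt{o} \transpose \mu_\nu$ for $\bbP_O$-a.e.\ $o$. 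Therefore $W \sprt{o} \transpose \mu_\nu \in \calR \sprt{V \sprt{o} \transpose}$, and since $V \sprt{o} \transpose \sprt{V \sprt{o} \transpose}^\upplus$ is the orthogonal projector onto $\calR \sprt{V \sprt{o} \transpose}$, we obtain $V \sprt{o} \transpose Q \sprt{o} \mu_\nu = V \sprt{o} \transpose \sprt{V \sprt{o} \transpose}^\upplus W \sprt{o} \transpose \mu_\nu = W \sprt{o} \transpose \mu_\nu$. In particular $\bbE \sbrk{Y \mid Z, O} \in \calG$ (take $\gamma = \mu_\nu$), and $g_f^\star \sprt{z, o} = \inp{\psi \sprt{z}, V \sprt{o} \transpose Q \sprt{o} \mu_\nu - V \sprt{o} \transpose \Sigma_\phi B Q \sprt{o} \beta}$.

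The remaining step --- which I expect to be the main obstacle --- is to absorb $V \sprt{o} \transpose \Sigma_\phi B Q \sprt{o} \beta$ into the parametrization $\scbrk{V \sprt{o} \transpose Q \sprt{o} \gamma : \gamma \in \bbR^{d_y}}$, i.e.\ to show this vector lies in $\calR \sprt{V \sprt{o} \transpose Q \sprt{o}}$ for the relevant $f \in \calF$; unlike the $W \sprt{o} \transpose \mu_\nu$ term this is not automatic. I would attack it with the same pseudo-inverse bookkeeping as in the proof of \propref{prop:primal-ivoc}: one has $\calR \sprt{Q \sprt{o}} \subseteq \calR \sprt{\sprt{V \sprt{o} \transpose}^\upplus} = \calR \sprt{V \sprt{o}}$, and since $B \in \bbR^{d_x \times d_x}$ and $\beta \in \bbR^{d_y}$ are free parameters the map $E$ acting on $\calF$ can be re-indexed so that the residual $g_f^\star$ at the optimizing $f$ is again of the form $\inp{\psi \sprt{\cdot}, V \sprt{\cdot} \transpose Q \sprt{\cdot} \gamma}$; if this fails in full generality I would fall back to establishing realizability only at the optimizing primal, where it reduces precisely to $\bbE \sbrk{Y \mid Z, O} \in \calG$ shown above. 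Once $g_f^\star \in \calG$ is in hand, the conclusion is immediate: $\max_{g \in \calG} \calL \sprt{f, g} = \max_{g \in L_2 \sprt{\bbP_{ZO}}} \calL \sprt{f, g}$ on $\calF$, so the saddle point of the restricted problem still solves Equation~\eqref{eq:fredholm_ivo}, exactly as in \propref{prop:dual_iv}.
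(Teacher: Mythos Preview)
Your proposal follows exactly the route the paper takes: its entire proof of \propref{prop:dual-ivoc} is the single sentence ``The argument from Proposition~\ref{prop:dual_iv} also applies here.'' You have simply unpacked that sentence, and in doing so you correctly isolated the one place where the IV-OC case is not a verbatim copy of the IV one --- the class $\calG$ here is indexed by a \emph{single} vector $\gamma \in \bbR^{d_y}$, so one must check that the $o$-dependence of $g_f^\star$ factors through $V\sprt{o}\transpose Q\sprt{o}$ with a coefficient independent of $o$. The paper's one-line proof does not address this; your computation showing $\bbE\sbrk{Y \mid Z, O} = \inp{\psi\sprt{z}, V\sprt{o}\transpose Q\sprt{o}\,\mu_\nu}$ via the reparametrization $W\sprt{o}\transpose = V\sprt{o}\transpose Q\sprt{o}$ (once $W\sprt{o}\transpose\mu_\nu \in \calR\sprt{V\sprt{o}\transpose}$ is established from Assumption~\ref{asp:existence-sol-ivoc}) is the substantive content, and your fallback --- realizability at the optimizing primal, where $g_{f_0}^\star = 0 \in \calG$ --- is almost certainly what the paper intends by ``realizable.'' Your treatment is strictly more careful than the paper's; the residual difficulty you flag for generic $f \in \calF$ is genuine and is not resolved in the paper either.
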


\paragraph{Remark (Representation for PCL):} IV-OC and PCL share the same conditioning structure. Thus, the representation characterization for IV-OC also applies to PCL, yielding the following primal and dual function classes
\begin{align} \label{eq:pcl_parametrization}
    \Fcal & \triangleq \Big\{\sprt{x, w} \mapsto \phi \sprt{w}\transpose B Q \sprt{x} \beta, \nonumber \\
    &\qquad\qquad\qquad B \in \bbR^{d_w \times d_w}, \beta \in \bbR^{d_x}\Big\}, \nonumber \\
    \calG & \triangleq \scbrk{\sprt{x, z} \mapsto \psi \sprt{z}\transpose V \sprt{x}\transpose Q \sprt{x} \gamma, \gamma\in \bbR^{d_x}}\,,
\end{align}
where the spectral representations arise from similar factorizations
\begin{align} \label{eq:pcl_decomp}
    p_{W \given X, Z} \sprt{w \given x, z} &= p_W \sprt{w} \inp{\phi \sprt{w}, V \sprt{x} \psi \sprt{z}}, \notag \\
    p_{Y \given X, Z} \sprt{y \given x, z} &= p_Y \sprt{y} \inp{\nu \sprt{y}, Q \sprt{x}\transpose V \sprt{x} \psi \sprt{z}}\,.
\end{align}

\vspace{-2mm}
\paragraph{Remark (Connection to existing IV-OC and PCL parametrization):} In~\citep{deaner2018proxy,mastouri2021proximal, xu2021dfpv}, the parametrization for $f$ is $f \sprt{x, w} = \zeta\transpose \sprt{\theta \sprt{x} \otimes \phi \sprt{w}}$, where $\theta$ and $\phi$ either fixed feature dictionaries or learned neural net feature dictionaries. This parametrization shares some similarity to Equation~\eqref{eq:pcl_parametrization}, if we rewrite it as
\begin{align*}
    f \sprt{x, w} &= \phi \sprt{w}\transpose B Q \sprt{x} \beta \\
    &= \inp{\beta \text{vec} \sprt{B}, Q \sprt{x}\transpose \otimes \phi \sprt{w}\transpose}\,.
\end{align*}
However, we obtain the features $Q(x)$ and $\phi \sprt{w}$ from a spectral perspective, differing from prior work.

\vspace{-2mm}
\paragraph{Remark (Comparison to~\cite{wang2022spectral}):} \cite{wang2022spectral} also exploit the spectral structure for causal inference, with a representation for IV regression that follows~\cite{darolles2011nonparametric}. However, their spectral representation for IV-OC and PCL differs. They simply reduce the IV-OC and PCL to standard IV by augmenting the treatments and instrumental variables, while we explicitly characterize the function classes for the IV-OC and PCL min-max problems and provide practical algorithms. As noted in~\cite{xu2020dfiv}, reducing to IV via augmentation ignores the problem structure, potentially leading to unnecessary complexity.

\section{CAUSAL ESTIMATION WITH SPECTRAL REPRESENTATION}\label{sec:spec_causal}

In this section, we introduce empirical algorithms based on our spectral representation.

\subsection{Contrastive Representation Learning}

As discussed in the previous section, we can obtain a representation of the covariates by decomposing a particular conditional expectation operator. We exploit different contrastive learning objectives to implement the factorization, which is amenable to a neural network parametrization and stochastic gradient descent, as we illustrate below.

Consider the factorization of $p_{X \given Z}$ in Assumption~\ref{asp:low-rank-iv} as an example. If we want to learn mappings $\phi$ and $\psi$ such that
\begin{equation*}
    p_{X \given Z} \sprt{x \given z} = p_X \sprt{x} \inp{\phi \sprt{x}, \psi \sprt{z}},
\end{equation*}
one method is to consider a set of possible representations $\calS$ and maximize the following objective
\begin{align} \label{eq:contrastive_l2}
   \max_{\sprt{\tilde\phi, \tilde\psi} \in \calS} \calL_{\mathrm{rep}} \sprt{\tilde\phi, \tilde\psi} \triangleq \frac2n \sum_{i=1}^n \tilde\phi \sprt{x_i}\transpose \tilde\psi \sprt{z_i} & \notag\\
   - \frac{1}{n \sprt{n-1}} \sum_{i \neq j} \sprt{\tilde\phi \sprt{x_i}\transpose \tilde\psi \sprt{z_j}}^2 - 1 &\,,
\end{align}
which has been used in \citet{wang2022spectral}. Another choice is to minimize the following objective
\begin{align} \label{eq:contrastive_mle}
    \min_{\sprt{\tilde\phi, \tilde\psi} \in \calS} \calL_{\mathrm{rep}} \sprt{\tilde\phi, \tilde\psi} \triangleq -\frac1n \sum_{i=1}^n \log \tilde\phi \sprt{x_i}\transpose \tilde\psi \sprt{z_i}& \notag\\
    + \frac1n \sum_{i=1}^n \log \sprt{\sum_{i \neq j} \tilde\phi \sprt{x_i}\transpose \tilde\psi \sprt{z_j}}&\,,
\end{align}
which has been used in \citet{zhang2022making, qiu2022contrastive}. Under mild assumptions such as realizability, both methods provide a consistent estimation of the mapping $\sprt{x, z} \mapsto \phi \sprt{x}\transpose \psi \sprt{z}$, but with different theoretical guarantees. This method learns the ratio $p_{X \given Z} \sprt{x \given z} / p_X \sprt{x}$ in the form of $\inp{\phi \sprt{x}, \psi \sprt{z}}$. Another benefit of the contrastive loss is that it is naturally compatible with stochastic gradient descent~(SGD) and can thus be scaled up to large datasets.

Similarly, we can construct corresponding contrastive losses for implementing the low-rank factorization of $p_{X \given Z, O}$ and $p_{Y \given Z, O}$ for IV-OC as~\eqref{eq:ivoc_factorization} and~\eqref{eq:ivoc_factorization_y}, and $p_{W \given X, Z}$ and $p_{Y \given X, Z}$ as~\eqref{eq:pcl_decomp} for PCL, respectively.  For instance, the contrastive loss $\calL_{\mathrm{rep}}$ in \eqref{eq:contrastive_mle} for $p_{X \given Z, O}$ in IV-OC takes the form:
\begin{align*} \label{eq:contrastive_mle_ivoc}
    \calL_{\mathrm{rep}} \sprt{\tilde\phi, \tilde\psi, \tilde V} \triangleq -\frac1n \sum_{i=1}^n \log \rbr{\tilde\phi \sprt{x_i}\transpose \rbr{\tilde V(o_i)\tilde\psi \sprt{z_i}}}& \notag\\
    + \frac1n \sum_{i=1}^n \log \sprt{\sum_{i \neq j} \tilde\phi \sprt{x_i}\transpose \rbr{\tilde V(o_j)\tilde\psi \sprt{z_j}}}&\,.
\end{align*}

\paragraph{Remark(Connection to DFIV~\citep{xu2020dfiv} and DFPV~\citep{xu2021dfpv}):}
The spectral representation for IV is equivalent to the target deep feature in~\citep{xu2020dfiv}. Their deep features $\psi \sprt{x}$ and $\phi \sprt{z}$ are obtained to fulfill the condition
\begin{equation} \label{eq:deep_fea}
    \bbE \sbrk{\psi \sprt{X} \given Z = z} = A \phi \sprt{z},
\end{equation}
where $A \in \bbR^{d \times p}$ is some constant matrix independent of $x$ and $z$. Naturally, the feature maps $\phi$ and $\psi$ from the low-rank decomposition of $p_{X \given Z}$ (Assumption~\ref{asp:low-rank-iv}) provide one solution to~\eqref{eq:deep_fea} with $A = I_{d \times p}$ by definition of the SVD. The algorithm proposed by~\citet{xu2020dfiv} employs an additional $d \times d$ matrix besides the mappings $\sprt{\phi, \psi}$ through a bi-level optimization. However, it requires propagating gradients through a Cholesky decomposition, increasing the computational cost. Likewise, the representations $\phi$ of $W$, $\theta$ of $X$, and $\psi$ of $Z$ in DFPV~\citep{xu2021dfpv} are learned through bi-level optimization, with the same computational drawbacks.


\subsection{Estimation of Primal and Dual Variables}


With the \emph{exact} spectral representation, we know which function classes to consider for both the primal and dual variables. By construction, they satisfy the assumptions required in previous works~\citep{bennett2023minimax, li2024regularized}, and thus our algorithm enjoys strong statistical guarantees, at least for IV regression.

In practice, this may not hold for two reasons: (i) the representations are \emph{learned from the data} and (ii) the user may choose a representation dimensionality $d'$ that differs from the true (potentially infinite) dimensionality $d$.

The first issue induces a bias because the learned representations are evaluated on the same data used for their training. This bias can be mitigated by splitting the data into separate sets for representation learning phase and saddle-point optimization. Alternatively, if data splitting is not feasible, a uniform covering argument can be used to show the additional error introduced by using the same data scales with the complexity of the function classes used in representation learning (\eg the number of parameters in the neural networks) divided by the square root of the number of samples. This resulting error is often reasonable, especially when compared to the initial double-sampling bias.

The second issue, where $d' \neq d$, leads to a misspecification error. If $d' < d$\footnote{Note that the case where $d' > d$ is typically handled by the regularizer, which would tend to select a solution with a small norm, effectively setting the components corresponding to the extra dimensions to zero.}, this error depends on the magnitude of the ``missing'' eigenvalues of the conditional expectation operator. If the operator's spectrum decays sufficiently rapidly, this error will be small. A rigorous analysis of these errors is left for future work; however, similar issues have been addressed in prior work (\eg \cite{wang2022spectral} in the context of representation learning, and \cite{bennett2023minimax} regarding the misspecification error).


We now present our algorithms below.
\paragraph{Instrument variable regression.} Given the estimated representation $\widehat{\phi}$ and $\widehat{\psi}$ obtained by factorizing $p_{X \given Z}$ using Equation~\eqref{eq:contrastive_l2} or~\eqref{eq:contrastive_mle}, we  solve the following saddle-point optimization problem,
\begin{align} \label{eq:linear_primal_dual_iv}
    \min_{u \in \bbR^d} \max_{v \in \bbR^d}\, &\bbE \bigg[\sprt{v\transpose \widehat{\psi} \sprt{Z}}\sprt{Y - u\transpose \widehat{\phi} \sprt{X}} \notag\\
    &- \frac12 \sprt{v\transpose \widehat{\psi} \sprt{Z}}^2\bigg] + \lambda \Omega \sprt{u\transpose \widehat{\phi} \sprt{\cdot}}\,.
\end{align}
This yields parameters $\wh{u}$ and $\wh{v}$. We then define the estimated functions $f: x \mapsto \wh{u}\transpose \widehat{\phi} \sprt{x}$ and $g \sprt{z} = \wh{v}\transpose \widehat{\psi} \sprt{z}$.

\paragraph{Instrument variable regression with observable confounding and PCL.} We denote $\wh{\phi}, \wh{\psi}, \wh{V}, \wh{Q}$ the learned representations for IV-OC. Unlike the IV regression setting without observable confounding, we now have an additional feature of the observable $o$. The parametrization for $f$, $f: (x, o) \mapsto \phi \sprt{x}\transpose B Q \sprt{o} \beta$, contains two parameters $B$ and $\beta$. Direct substitution of this parametrization into the saddle-point problem would induce non-convexity. To recover convexity, we consider the following reparametrization:
\begin{align*}
    \phi \sprt{x}\transpose B Q \sprt{o} \beta &= \tr \sprt{\phi \sprt{x}\transpose B Q \sprt{o} \beta} \\
   &= \tr \Bigl(\sprt{Q \sprt{o}\transpose \otimes \phi \sprt{x}\transpose}\transpose \underbrace{\beta \text{vec} \sprt{B}\transpose}_{\triangleq G}\Bigr) \\
   &= \inp{G, Q \sprt{o}\transpose \otimes \phi \sprt{x}\transpose}_{\mathrm{F}}\,,
\end{align*}
where $\inp{\cdot, \cdot}_{\mathrm{F}}$ denotes the Frobenius inner product, and the second equality comes from a property of the Kronecker matrix-vector product
\begin{align*}
    \sprt{C \otimes D} \text{vec} \sprt{G} = \text{vec} \sprt{C G D\transpose}\,.
\end{align*}
We define $f_{G}: \sprt{x, o} \mapsto \inp{G, \wh{Q} \sprt{o}\transpose \otimes \wh{\phi} \sprt{x}\transpose}_{\mathrm{F}}$ for some $G \in \bbR^{d_y \times d_x^2}$ and $g_w: \sprt{z, o} \mapsto \wh{\psi} \sprt{z}\transpose \wh{V} \sprt{o}\transpose \wh{Q} \sprt{o} w$ for a vector $w \in \bbR^{d_y}$. Based on this reparametrization, we consider the following convex-concave saddle-point optimization problem
\begin{align} \label{eq:linear_primal_dual_iv_observable_confounder}
    \min_{G \in \bbR^{d_y \times d_x^2}} \max_{w \in \bbR^{d_y}}\, \wh{\calL} \sprt{G, w}\,,
\end{align}
where we defined
\begin{align*}
    \wh{\calL} \sprt{G, w} =
    \bbE \bigg[&g_w \sprt{Z, O} \sprt{Y - f_G \sprt{X, O}} \\
    &- \frac12 g_w \sprt{Z, O}^2\bigg] + \lambda \Omega \sprt{f_G}\,.
\end{align*}
Denoting $\wh{G}$ and $\wh{w}$ the parameters solving the saddle-point problem, we return the functions $f: \sprt{x, o} \mapsto f_{\wh{G}} \sprt{x, o}$ and $g: \sprt{z, o} \mapsto g_{\wh{w}} \sprt{z, o}$. We can perform similar derivations for the PCL setting. We provide the complete algorithms in Appendix~\ref{appendix:algorithm}.





\section{EXPERIMENTS}\label{sec:experiments}

We evaluate the empirical performance of the proposed \algabb and \algabbpcl and several modern methods for IV  with and without observable confounders, as well as PCL~\footnote{Available at \url{https://github.com/haotiansun14/SpecIV}}. This evaluation is conducted on two datasets. Following \cite{xu2020dfiv, xu2021dfpv}, we utilize the out-of-sample mean-square error (OOS MSE) as the metric for all test cases. Experiment details and setup can be found in Appendix~\ref{app:setup}.

\paragraph{Baselines.} For the IV regression, we contrast \algabb with two methods with pre-specified features -- the Kernel IV (KIV)~\citep{singh2019kernel} and the Dual Embedding (DE)~\citep{dai2017learning}~\footnote{The algorithm Dual IV~\citep{muandet2020dual} follows the same primal-dual framework of DE, but with closed-form solution, which involves explicit matrix inverse, while DE exploits the stochastic gradient algorithm that is more computationally efficient.}. Additionally, we evaluate several approaches that leverage deep neural networks for feature representation, namely, DFIV~\citep{xu2020dfiv}, and DeepGMM~\citep{bennett2019deep}.
We evaluate DFPV~\citep{xu2021dfpv}, KPV~\citep{mastouri2023kpv}, PMMR~\citep{mastouri2023kpv}, and CEVAE~\citep{louizos2017cevae}. 
\paragraph{Datasets.} We conduct experiments on the dSprites Dataset~\citep{dsprites} for both low- and high-dimensional IV and PCL settings, as well as the Demand Design dataset~\citep{hartford2017deepiv} for IV with observable confounders. 
1) \textbf{dSprites} comprises images determined by five latent parameters (\texttt{shape}, \texttt{scale}, \texttt{rotation}, \texttt{posX}, \texttt{posY}). Each image has dimensions of $64\times64=4096$ and serves as the treatment variable $X$. Following the setup in \cite{xu2020dfiv}, we keep \texttt{shape} fixed as \texttt{heart} and use \texttt{posY} as the hidden confounder. The remaining latent variables compose the instrument variable $Z$. 
In addition, we introduce a high-dimensional setting where the instruments are mapped to a high-dimensional variable in $\mathbb{R}^{2352}$, as proposed in \cite{bennett2019deep}. 
2) \textbf{Demand Design} is a synthetic benchmark for nonlinear IV regression. Given the airplane ticket price $P$, the objective is to predict ticket demands, denoted as $Y$, in the presence of two observable confounders: price sensitivity $S\in\{1,\dots,7\}$ and the year time $T\in[0,10]$. Additionally, we introduce an unobservable confounder, represented as correlated noise in $P$ and $Y$. Furthermore, we set the fuel price $C$ as the instrument variable and $X$ as the treatment. We also employ the mapping function introduced in \cite{bennett2019deep} to map both $P$ and $S$ to high-dimensional variables in $\mathbb{R}^{784}$. This represents a more challenging scenario, as the IV regression method must estimate the relevant variables from noisy, high-dimensional data. Details about the data generation are presented in Appendix~\ref{app:data_mapping} and \ref{app:data_demand}.

\paragraph{Instrument Variable Regression.}

\begin{figure}[!htb]
    \centering
    \begin{subfigure}[b]{0.5\textwidth}
    \centering
        \includegraphics[width=.95\linewidth]{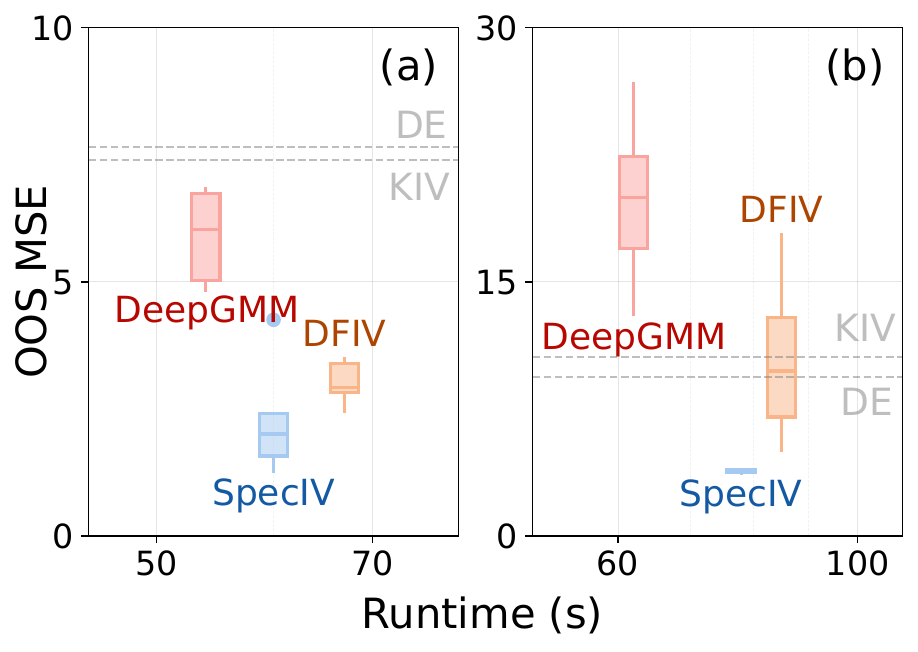}
    \end{subfigure}
    \begin{subfigure}[b]{0.5\textwidth}
    \centering
        \includegraphics[width=.95\linewidth]{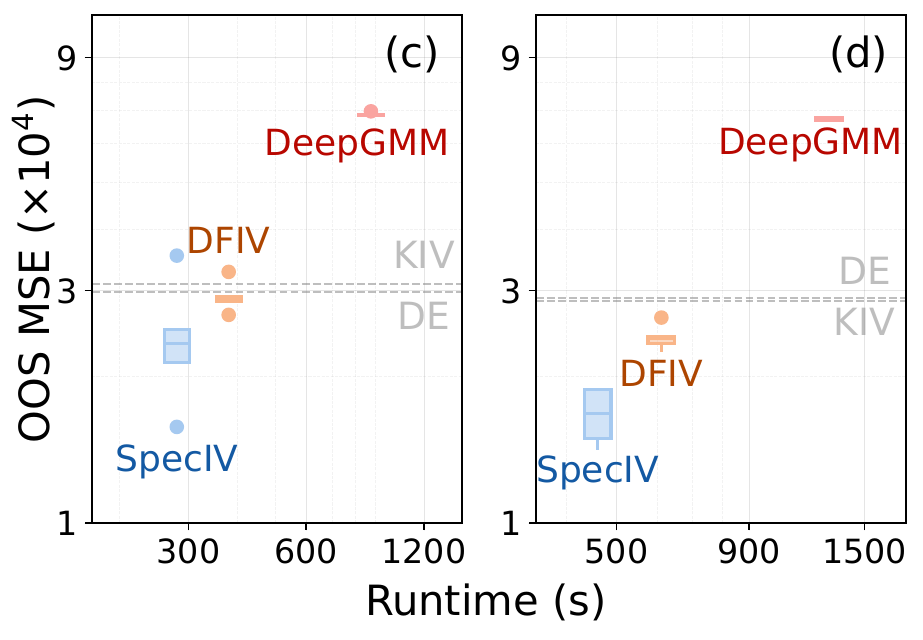}
    \end{subfigure}
    \caption{MSE and Runtime on dSprites Dataset with (a) Low-Dimensional Instruments (32) and (b) High-Dimensional Instruments (64), and Demand Design dataset with (c) 5,000 and (d) 10,000 training samples. Methods employing pre-specified features, \textit{i.e.}, DE and KIV, are represented as dashed baselines.}
    \label{fig:comparison}
\end{figure}

Figure~\ref{fig:comparison}a and \ref{fig:comparison}b depict the performance and the execution time of various methods on the dSprites dataset. The optimal performance corresponds to proximity to the bottom-left corner, indicating low MSE and reduced runtime.
\algabb outperforms all other approaches in terms of MSE in both low- and high-dimensional contexts while maintaining a competitive runtime. 
Methods with fixed features, such as KIV and DE, yield high MSE. The cause might be their reliance on predetermined feature representations, which restricts adaptability.
In low-dimensional scenarios, DeepGMM and DFIV manage to keep errors within a reasonable range, but DeepGMM's performance declines sharply with increased feature dimensions, while \algabb maintains good estimation accuracy. Furthermore, both DeepGMM and DFIV exhibit increased variance in the high-dimensional setting, possibly due to their ineffective representation learning for high-dimensional features. In summary, \algabb demonstrates superior and consistent performance over the baseline methods in both settings for instrumental variable tasks.

\paragraph{Instrument Variable Regression with Observable Confounder.}

In the Demand Design dataset, we employ two settings for the observable confounders. For \algabb and DFIV, we set the ticket price $P$ as the treatment and the fuel price $C$ as the instrument, accompanied by the observables $(T,S)$.  Each of these three components is presented using distinct neural networks. 
For other methods that don't model observables separately, we integrate the observables directly with the treatment and instrument. This means $(P,T,S)$ is used as the treatment and $(C,T,S)$ as the instrument. 
Figure~\ref{fig:comparison}c and Figure~\ref{fig:comparison}d represent the performance of these approaches given different amounts of training samples. In both scenarios, \algabb consistently delivers the lowest error also achieving the best runtime. 
DeepGMM performs the least efficiently, with the highest error rates and runtimes. This may result from incorporating observables into both treatment and instrument. This overlooks the fact that we only need to consider the conditional expectation of $X$ given $(Z,O)$, thus making the problem suboptimal.
KIV and DE, limited by their fixed feature representations, do not benefit from an increase in training sample size and remain less expressive.
Overall, the \algabb method outperforms other approaches in learning structured functions with the presence of observable confounders.

\paragraph{Proxy Causal Learning.}
\begin{figure}[!ht]
    \centering
    \includegraphics[width=.95\linewidth]{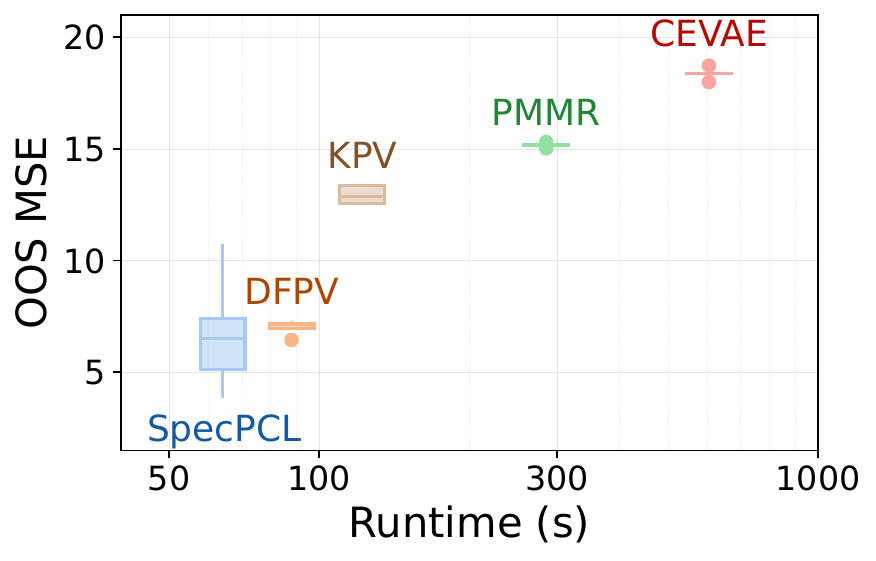}
    \caption{MSE and Runtime of structural function estimation on the dSprites dataset with 5,000 training samples.}
    \label{fig:dprites_pcl}
\end{figure}
Experimental results are shown in Figure~\ref{fig:dprites_pcl}. \algabbpcl outperforms existing methods with superior estimation accuracy and computational efficiency, which demonstrates its strong capability in capturing complex structural functions. DFPV offers a reasonable balance between error and runtime but falls short of \algabb's performance.
KPV appears to yield a slightly lower error, possibly because KPV harnesses a greater number of parameters to express its solution. 
CEVAE, despite its flexibility through neural networks, underperforms all other methods. The reason could be that CEVAE does not leverage the relation between the proxies and the structural function.

\vspace{-2pt}
\section{CONCLUSION}
\vspace{-2mm}

We introduced a novel spectral representation learning framework for causal estimation in the presence of hidden confounders. Our approach leverages a low-rank assumption on conditional densities to characterize function classes within a saddle-point optimization problem. This characterization enables the development of efficient algorithms for IV regression, both with and without observed confounders, and for proximal causal learning (PCL). Extensive experimental validation demonstrates that our method outperforms existing approaches. While our method benefits from existing theoretical guarantees for IV regression (\eg, \citealp{bennett2023source, wang2022spectral}), extending these guarantees to IV regression with observed confounders and to PCL remains a challenging open problem. Another interesting direction for future research is a formal comparison of the performance of two-stage methods and conditional moment methods.

\subsubsection*{Acknowledgements}

Arthur Gretton and Antoine Moulin wish to thank Dimitri Meunier for helpful discussions. 
This project has received funding from the European Research Council (ERC), under the European Union’s Horizon 2020 research and innovation programme (Grant agreement No. 950180); 
from the Gatsby Charitable Foundation;
from NSF ECCS2401391;
from NSF IIS-2403240;
and from Dolby support.
Antoine Moulin acknowledges travel support from ELISE (GA no 951847) and the Gatsby Charitable Foundation.

\bibliographystyle{plainnat}
\bibliography{aistats.bib}

\section*{Checklist}



 \begin{enumerate}

 \item For all models and algorithms presented, check if you include:
 \begin{enumerate}
   \item A clear description of the mathematical setting, assumptions, algorithm, and/or model. [Yes]
   \item An analysis of the properties and complexity (time, space, sample size) of any algorithm. [Yes]
   \item (Optional) Anonymized source code, with specification of all dependencies, including external libraries. [No]
 \end{enumerate}

 \item For any theoretical claim, check if you include:
 \begin{enumerate}
   \item Statements of the full set of assumptions of all theoretical results. [Yes]
   \item Complete proofs of all theoretical results. [Yes]
   \item Clear explanations of any assumptions. [Yes]     
 \end{enumerate}

 \item For all figures and tables that present empirical results, check if you include:
 \begin{enumerate}
   \item The code, data, and instructions needed to reproduce the main experimental results (either in the supplemental material or as a URL). [Yes]
   \item All the training details (e.g., data splits, hyperparameters, how they were chosen). [Yes]
         \item A clear definition of the specific measure or statistics and error bars (e.g., with respect to the random seed after running experiments multiple times). [Yes]
         \item A description of the computing infrastructure used. (e.g., type of GPUs, internal cluster, or cloud provider). [Yes]
 \end{enumerate}

 \item If you are using existing assets (e.g., code, data, models) or curating/releasing new assets, check if you include:
 \begin{enumerate}
   \item Citations of the creator If your work uses existing assets. [Yes]
   \item The license information of the assets, if applicable. [No]
   \item New assets either in the supplemental material or as a URL, if applicable. [Not Applicable]
   \item Information about consent from data providers/curators. [Not Applicable]
   \item Discussion of sensible content if applicable, e.g., personally identifiable information or offensive content. [Not Applicable]
 \end{enumerate}

 \item If you used crowdsourcing or conducted research with human subjects, check if you include:
 \begin{enumerate}
   \item The full text of instructions given to participants and screenshots. [Not Applicable]
   \item Descriptions of potential participant risks, with links to Institutional Review Board (IRB) approvals if applicable. [Not Applicable]
   \item The estimated hourly wage paid to participants and the total amount spent on participant compensation. [Not Applicable]
 \end{enumerate}

 \end{enumerate}

\newpage
\appendix
\onecolumn

\section{LIMITATIONS AND BROADER IMPACTS}

\subsection{Limitations}
\label{app:limitation}

In this paper, we introduced a novel spectral representation learning framework for causal estimation in the presence of hidden confounders. We demonstrated the superior performance of our method over existing approaches, showing reduced computational burden and increased accuracy. While our algorithm benefits from established theoretical guarantees for instrumental variables (IV) (e.g., \citealp{bennett2023source, wang2022spectral}), our advantages for IV with observed confounders and principal component learning (PCL) are primarily justified empirically. Future work will extend existing theoretical proofs to these settings.

\subsection{Broader Impacts}
\label{app:broader_impact}

\textbf{Potential Positive Societal Impacts.} Accurately estimating causal effects has significant potential to benefit society across multiple disciplines. For example, understanding causal relationships in epidemiology can enhance public health strategies, contributing to better disease prevention and control. This paper addresses the problem of estimating causal effects with hidden confounders. The proposed method can be effectively applied to various domains with different settings. This leads to more informed decision-making and improved outcomes in various fields.

\textbf{Potential Negative Societal Impacts.} While the proposed method for estimating causal effects offers multiple benefits, there are potential negative societal impacts to consider. Misapplication of the method without thorough validation could lead to incorrect conclusions and potentially harmful decisions, particularly in sensitive areas like healthcare or public policy.
\section{COMPLETE ALGORITHMS}
\label{appendix:algorithm}

To make our whole procedure clear, we provide the complete algorithm framework for causal estimation with spectral representation, with modifications for corresponding problems.

\begin{algorithm}
    \caption{Spectral Representation Learning for Instrument Variable Regression}
    \label{alg:iv}
    \begin{algorithmic}
        \INPUT Function class $\mathcal{F}$ of the representation.
        \STATE Estimate $\phi: \calX \to \bbR^d$ and $\psi: \calZ \to \bbR^d$ with samples $\scbrk{x_i, z_i}$ using Equation~\eqref{eq:contrastive_l2} or Equation~\eqref{eq:contrastive_mle} to obtain the estimated representations $\widehat{\phi}$ and $\widehat{\psi}$.
        \STATE Solve the min-max problem Equation~\eqref{eq:linear_primal_dual_iv} to obtain the parameters $v$ and $w$.
        \OUTPUT $f: x \mapsto \inp{v, \widehat{\phi} \sprt{x}}$ and $g: z \mapsto \inp{w, \wh{\psi} \sprt{z}}$.
    \end{algorithmic}
\end{algorithm}

\begin{algorithm}
    \caption{Spectral Representation Learning for Instrument Variable Regression with Observable Confounding}
    \label{alg:iv_confounding}
    \begin{algorithmic}
        \INPUT Function classes $\calF_{xzo}$, $\calF_{yzo}$.
        \STATE Estimate $\psi: \calZ \to \bbR^{d_z}, \phi: \calX \to \bbR^{d_x}, V: \calO \to \bbR^{d_x \times d_z}$ with samples $\scbrk{x_i, z_i, o_i}$ to obtain the estimated representations $\widehat{\phi}$, $\widehat{V}$, and $\widehat{\psi}$.
        \STATE Estimate $Q: \calO \to \bbR^{d_x \times d_y}$, $\nu: \calY \to \bbR^{d_y}$ with samples $\scbrk{y_i, z_i, o_i}$ and learned representations $\widehat{\psi}$ and $\widehat{V}$ to obtain the estimated representations $\widehat{Q}$, $\widehat{\nu}$.
        \STATE Solve the min-max optimization Equation~\eqref{eq:linear_primal_dual_iv_observable_confounder} and obtain the parameters $G$ and $w$.
        \OUTPUT $f: \sprt{x, o} \mapsto \inp{G, \widehat{V} \sprt{o}\transpose \otimes \widehat{\phi} \sprt{x}\transpose}$ and $g: \sprt{z, o} \mapsto \inp{w, \widehat{Q} \sprt{o}\transpose \widehat{V}\sprt{o} \widehat{\psi} \sprt{z}}$.
    \end{algorithmic}
\end{algorithm}

\paragraph{Implementation of Representation-Learning Stage.} For IV tasks, we parameterize $\phi: \calX \to \bbR^d$ and $\psi: \calZ \to \bbR^d$ using two neural networks. In the IV-OC setting, we extend this by additionally parameterizing $\nu: \calY \to \bbR^{d_y}$ with a neural network and introducing another network $\xi: \calO \to \bbR^{d_o}$ for the observable. To implement the low-rank factorization of $p_{X \given Z, O}$ and $p_{Y \given Z, O}$ for IV-OC, we introduce two learnable tensors, $P_V \in \bbR^{d_x \times d_z \times d_o}$ and $P_Q \in \bbR^{d_x \times d_y \times d_o}$. For any $o$, we then define $V \sprt{o} = P_V \otimes_3 \xi \sprt{o} \in \bbR^{d_x \times d_z}$, $Q \sprt{o} = P_Q \otimes_3 \xi \sprt{o} \in \bbR^{d_x \times d_y}$, and $W \sprt{o} = Q \sprt{o}\transpose V \sprt{o}$, which are subsequently incorporated into \eqref{eq:ivoc_factorization} and \eqref{eq:ivoc_factorization_y} for representation learning. A similar strategy is adopted for the PCL setting.
\section{IDENTIFIABILITY AND ROLE OF THE REGULARIZER}
\label{sec:analysis}

We now discuss how the choice of regularizer $\Omega$ influences the solution obtained by our method. While we focus on IV regression for clarity, similar considerations apply to the IV-OC and PCL settings.

Adding a regularizer $\Omega$ when the underlying integral equation has multiple solutions (\ie is not identifiable) is crucial. In traditional IV regression, identifiability is established \emph{before} considering estimation, typically through assumptions on the relationship between the instrument, treatment, and outcome (\eg, relevance and exclusion restrictions). These assumptions ensure a \emph{unique} solution to the integral equation. The estimation problem then focuses on finding this unique solution, often within a predefined function space \citep{liao2020provably}.

In contrast, our approach combines representation learning with estimation. We first learn a spectral representation, and \emph{then} solve a regularized optimization problem within the space spanned by the learned features. In general, \emph{without regularization}, the underlying integral equation will have infinitely many solutions. The regularizer $\Omega$ then acts as a \emph{selection mechanism}, choosing one particular solution from this infinite set and implicitly defining what we consider to be a ``good'' solution.

Our spectral representation, as described in Section~\ref{sec:method}, implicitly defines a norm depending on the underlying base measure. The eigenfunctions (and corresponding feature maps) obtained from the singular value decomposition of the conditional expectation operator are inherently tied to this base measure. In IV regression, we typically use the marginal distribution of the treatment, $\bbP_X$, and the euclidean norm on the space $L_2 \sprt{\bbP_X}$. Because $f$ can be represented as $f = \inp{\phi \sprt{\cdot}, u}$ (Proposition~\ref{prop:primal_iv}), an alternative norm can be defined directly on the representation space as $\norm{f}_{\calR \sprt{\Phi}} \triangleq \norm{u}_2$. This norm is also tied to $\bbP_X$ through the learned representation $\phi$.

However, other base measures are possible.  Consider, for example, an off-policy evaluation setting where we want to estimate the causal effect under a target policy $\pi$ different from the behavior policy that generated the data. In this case, it might be more appropriate to use, \eg, a base measure weighted by the importance sampling ratio $\frac{\diff \pi}{\diff \bbP_X}$. This different base measure would result in different eigenfunctions and a different notion of what constitutes a ``simple'' or ``smooth'' solution. This highlights a crucial point: the choice of base measure, often implicit, significantly influences the learned representation and, consequently, the solution selected by the regularizer when the underlying problem is under-identified.

\section{OMITTED PROOFS}
\label{app:method}

We discuss the omitted proofs of Section~\ref{sec:method}.

\subsection{Proof of Proposition~\ref{prop:linear-range-IV}}

\linearrangeIV*

\begin{proof}
    Let $f \in L_2 \sprt{\bbP_x}$. By definition of the conditional expectation operator $E$, we have
    \begin{align*}
        E f &= \int_\calX f \sprt{x} \,\bbP_{X \given Z} \sprt{Z, \diff x} \\
        &= \int_\calX f \sprt{x} \frac{\diff \bbP_{X \given Z} \sprt{Z, \cdot}}{\diff \bbP_X} \sprt{x} \bbP_X \sprt{\diff x} \\
        &= \int_\calX f \sprt{x} \inp{\phi \sprt{x}, \psi \sprt{Z}} \bbP_X \sprt{\diff x}\,,
    \end{align*}
    where the last two equalities follow from Assumption~\ref{asp:low-rank-iv}. The conclusion follows from the linearity of $\inp{\cdot, \psi \sprt{Z}}$
    \begin{align*}
        E f &= \inp{\int_\calX \phi \sprt{x} f \sprt{x} \bbP_X \sprt{\diff x}, \psi \sprt{Z}} \\
        &= \inp{\psi \sprt{Z}, v_f}\,,
    \end{align*}
    where we defined $v_f = \int_\calX \phi \sprt{x} f \sprt{x} \bbP_X \sprt{\diff x}$.
\end{proof}

\subsection{Proof of Proposition~\ref{prop:dual_iv}}

\dualIV*

\begin{proof}
    For any $f \in L_2 \sprt{\bbP_X}$ and $g \in L_2 \sprt{\bbP_Z}$, the objective of the saddle-point problem~\eqref{eq:fredholm-minmax} considered for IV regression is defined as
    \begin{equation*}
        \calL \sprt{f, g} = \bbE \sbrk{g \sprt{Z} \sprt{Y - f \sprt{X}} - \frac12 g \sprt{Z}^2} + \lambda \Omega \sprt{f}\,.
    \end{equation*}
    We aim to find a reasonable function class $\calG$ that contains the maximizer $g_f^\star = \argmax_{g \in L_2 \sprt{\bbP_Z}} \calL \sprt{f, g}$ for any $f$. Note that by the tower rule, we can write
    \begin{align*}
        \calL \sprt{f, g} &= \bbE \sbrk{g \sprt{Z} \bbE \sbrk{Y - f \sprt{X} \given Z} - \frac12 g \sprt{Z}^2} + \lambda \Omega \sprt{f} \\
        &= \inp{g, r_0 - E f}_{L_2 \sprt{\bbP_Z}} - \frac12 \norm{g}_{L_2 \sprt{\bbP_Z}}^2 + \lambda \Omega \sprt{f}\,.
    \end{align*}
    $\calL \sprt{f, \cdot}$ is strongly concave. If a point $g_f^\star$ maximizes $\calL \sprt{f, \cdot}$ then it must be a zero of its Frechet derivative, which we denote $D_g \calL$,
    \begin{align*}
        D_g \calL \sprt{f, g_f^\star} = 0 &\quad\text{iff}\quad r_0 - E f - g_f^\star = 0 \\
        &\quad\text{iff}\quad g_f^\star = r_0 - E f\,.
    \end{align*}
    By Assumptions~\ref{asp:existence-sol-iv}, \ref{asp:low-rank-iv}, there exist vectors $v_0, v_f \in \bbR^d$ such that $r_0 = \inp{\psi \sprt{Z}, v_0}$ and $E f = \inp{\psi \sprt{Z}, v_f}$. This implies $g_f^\star$ takes the form $g_f^\star = \inp{\psi \sprt{Z}, v_0 - v_f}$, \ie $g_f^\star \in \calR \sprt{\Psi}$ and we can restrict the optimization problem to the class $\calG = \calR \sprt{\Psi}$.
\end{proof}

\subsection{Proof of Proposition~\ref{prop:primal_iv}}

\primalIV*

\begin{proof}
    Notice that the subspace $\calR \sprt{\Phi}$ of $L_2 \sprt{\bbP_X}$ is closed by continuity of the inner product in $\bbR^d$. Therefore, it is in direct sum with its orthogonal, \ie $L_2 \sprt{\bbP_X} = \calR \sprt{\Phi} \oplus \calR \sprt{\Phi}^\perp$ (Theorem 3.3.4. in \citealp{kreyszig1991introductory}). Following this observation, we write $f \in L_2 \sprt{\bbP_X}$ as $f = \inp{u_f, \phi \sprt{\cdot}} + h^\perp$, with $u_f \in \bbR^d$, $h^\perp \in \calR \sprt{\Phi}^\perp$, and we have
    \begin{align*}
        E f &= \int_\calX \sprt{\inp{u_f, \phi \sprt{x}} + h^\perp \sprt{x}} \inp{\phi \sprt{x}, \psi \sprt{Z}} \bbP_X \sprt{\diff x} & \text{(Assumption~\ref{asp:low-rank-iv})} \\
        &= \inp{u_f, \sprt{\int_\calX \phi \sprt{x} \phi \sprt{x}\transpose \bbP_X \sprt{\diff x}} \psi \sprt{Z}}_{\bbR^d} + \inp{h^\perp, \Phi \psi \sprt{Z}}_{L_2 \sprt{\bbP_X}} \\
        &= \inp{u_f, \sprt{\int_\calX \phi \sprt{x} \phi \sprt{x}\transpose \bbP_X \sprt{\diff x}} \psi \sprt{Z}}_{\bbR^d} & \sprt{h^\perp \in \calR \sprt{\Phi}^\perp}\,.
    \end{align*}
    Notice we also have
    \begin{align*}
        \norm{f}_{L_2 \sprt{\bbP_X}}^2 &= \bbE \sbrk{f \sprt{X}^2} \\
        &= \bbE \sbrk{\inp{u_f, \phi \sprt{X}}^2} + \bbE \sbrk{h^\perp \sprt{X}^2} & \sprt{h^\perp \perp \inp{u_f, \phi \sprt{\cdot}}} \\
        &= \inp{u_f, \bbE \sbrk{\phi \sprt{X} \phi \sprt{X}\transpose} u_f} + \bbE \sbrk{h^\perp \sprt{X}^2}\,.
    \end{align*}
    This shows $h^\perp$ influences the objective function only through $\bbE \sbrk{h^\perp \sprt{X}^2}$. Since we are minimizing for $f$, we can set $h^\perp = 0$ and only consider $f \in \calR \sprt{\Phi}$.
    
    Furthermore, Assumption~\ref{asp:existence-sol-iv} states there exists a solution to Problem~\ref{eq:fredholm_iv}. Using the same decomposition, this provides a solution to Problem~\ref{eq:fredholm_iv} in $\calR \sprt{\Phi}$. This concludes the proof.
\end{proof}

\subsection{Proof of Proposition~\ref{prop:linear-range-IVOC}}

\linearrangeIVOC*

\begin{proof}
    We follow the same reasoning as in the proof of Proposition~\ref{prop:linear-range-IV}. Let $f \in L_2 \sprt{\bbP_{X O}}$, and note we have
    \begin{align*}
        E f &= \int_\calX f \sprt{x, O} \,\bbP_{X \given Z, O} \sprt{\sprt{Z, O}, \diff x} \\
        &= \int_\calX f \sprt{x, O} \frac{\diff \bbP_{X \given Z, O} \sprt{\sprt{Z, O}, \cdot}}{\diff \bbP_X} \sprt{x} \bbP_X \sprt{\diff x} \\
        &= \int_\calX f \sprt{x, O} \inp{\phi \sprt{x}, V \sprt{O} \psi \sprt{Z}} \bbP_X \sprt{\diff x}\,,
    \end{align*}
    where the last two equalities follow from Assumption~\ref{asp:low-rank-ivoc}. The conclusion follows from the linearity of the inner product
    \begin{align*}
        E f &= \inp{\psi \sprt{Z}, V \sprt{O}\transpose \int_\calX \phi \sprt{x} f \sprt{x, O} \bbP_X \sprt{\diff x}} \\
        &= \inp{\psi \sprt{Z}, v_f \sprt{O}}\,,
    \end{align*}
    where we defined $v_f: o \mapsto V \sprt{o}\transpose \int_\calX \phi \sprt{x} f \sprt{x, o} \bbP_X \sprt{\diff x}$.
\end{proof}

\subsection{Proof of Proposition~\ref{prop:primal-ivoc}}

\primalIVOC*

\begin{proof}
    By plugging Equation~\eqref{eq:ivoc_factorization_y} into the RHS of Equation~\eqref{eq:fredholm_ivo}, we obtain
    \begin{equation*}
     \inp{\psi \sprt{z}, V \sprt{o}\transpose \sbrk{\int_\calX \phi \sprt{x} \phi\rbr{x}\transpose \bbP_X \sprt{\diff x}} v \sprt{o}} = \inp{\psi \sprt{z}, W \sprt{o}\transpose \int_\calY y \nu \sprt{y} \bbP_Y \sprt{\diff y}}\,,
    \end{equation*}
    Using the fact that $\psi$ spans every direction, we get
    \begin{equation*}
        V \sprt{o}\transpose \underbrace{\sbrk{\int \phi \sprt{x} \phi \sprt{x}\transpose \bbP_X \sprt{\diff x}}}_{\triangleq A} v \sprt{o} = W \sprt{o}\transpose \underbrace{\int y \nu \sprt{y} \bbP_Y \sprt{\diff y}}_{\triangleq \alpha}\,.
    \end{equation*}
    

    We can now write $v$ as
    \begin{equation} \label{eq:vo-form}
        v \sprt{o} = A^\upplus \sprt{V \sprt{o}\transpose}^\upplus W \sprt{o}\transpose \alpha\,.
    \end{equation}
    Finding the space $v$ belongs to requires taking several (pseudo-)inverses, which is computationally expensive. We can instead use an alternative parametrization in $Q \sprt{o} \triangleq \sprt{V \sprt{o} \transpose}^\upplus W \sprt{o}\transpose$, which gives $W \sprt{o} = Q \sprt{o}\transpose V \sprt{o}$ in Equation~\eqref{eq:ivoc_factorization_y}. This leads to the desired parametrization.
\end{proof}

\subsection{Proof of Proposition~\ref{prop:dual-ivoc}}

\dualIVOC*

\begin{proof}
    The argument from Proposition~\ref{prop:dual_iv} also applies here.
\end{proof}

\section{EXPERIMENT DETAILS}
\label{app:setup}

\paragraph{Setup.} For the IV tasks, each method is assessed using a test set comprising 2,000 samples, while for the PCL task, the evaluation is based on a test set of 500 samples. The test samples are randomly generated using the same data generation setting as the training set.
All experiments are conducted on a system equipped with an Intel(R) Xeon(R) Silver 4114 CPU @ 2.20GHz and a Quadro RTX 8000 GPU. 
Note that for IV with observable confounders, we simplify the implementation by omitting the decomposition of $p_{X \given Z}$. This helps reduce the number of parameters that need to be optimized.

\paragraph{Hyperparameters.} For the IV regression experiments, we employ the same hyper-parameter setting for DFIV, KIV, and DeepGMM used in \cite{xu2020dfiv}. For DE, we use the same Gaussian kernel with the bandwidth determined by the median trick \cite{singh2019kernel}. 
The network structures for \algabb on different datasets are provided in Table~\ref{tab: ours_dsprites} and \ref{tab: ours_demand}.

\paragraph{Dataset Dimensions.} The feature dimensions for these configurations are presented in Table~\ref{tab:datasets_dim}. 
\begin{table}[htb]
\caption{\small{Dimensions of treatment, instrument, and observable confounder Variables in two datasets with varying settings.
In the dSprites dataset, the 'low' and 'high' dimensions pertain to the instrument feature dimension.
In the Demand Design dataset, 'SC' represents separated observable confounders. \algabb and DFIV are assessed with separate representations for the observables. The remaining baseline methods are evaluated without separate observables, \ie, via incorporating these confounders into both treatment and instrumental variables.}}
\label{tab:datasets_dim}
\centering
\begin{tabular}{@{}lcccc@{}}
\toprule
\multirow{2}{*}{{\textbf{Dataset}}}  & \multicolumn{2}{c}{\textbf{dSprites}}   & \multicolumn{2}{c}{\textbf{Demand Design}}                            \\\cmidrule(lr){2-3} \cmidrule(lr){4-5}
 & Low-dim    & High-dim          & \multicolumn{1}{l}{With SC} & No SC      \\ \midrule
Treatment & 4,096& 4,096 & 784    & 1,569 \\
Instrument & 3  & 2,352 & 1      & 786  \\
Observable & -  & -    & 785    & -    \\\bottomrule                
\end{tabular}
\end{table}

\begin{table}[htb]
\centering
\caption{Network Structure of the feature extractor for Demand Design dataset. Each 2D convolutional layer (Conv2d) is depicted with \texttt{(input dimension, output dimension, kernel size, stride, padding)}.}
\label{tab: ours_imgnet}
\captionsetup{labelformat=empty}
\begin{minipage}{\textwidth}
\centering
\selectfont\setlength{\tabcolsep}{0.6em}
\begin{tabular}{cc}
\multicolumn{2}{c}{\textbf{ImageFeature}}\\
\hline
Layer & Configuration \\ \hline
1     & Input: 784      \\
2     & Conv2d(1, 16, 5, 1, 2), ReLU, MaxPool2d\\
3     & Conv2d(16, 32, 5, 1, 2), ReLU, MaxPool2d\\
4     & FC$(1568, 1024)$\\
\hline
\end{tabular}
\end{minipage}%
\end{table}

\begin{table}[htb]
\centering
\caption{Network Structure of \algabb for IV on the dSprites dataset. The tuple for each component represents the input/output dimensions within a particular component. FC refers to the fully-connected layers. BN represents batch normalization.}
\label{tab: ours_dsprites}
\captionsetup{labelformat=empty}
\begin{minipage}{.5\textwidth}
\centering
\begin{tabular}{cc}
\multicolumn{2}{c}{\textbf{Treatment Feature Net (32)}}\\
\hline
Layer & Configuration \\ \hline
1     & Input: 4096    \\
2     & FC$(4096, 1024)$, ReLU, BN      \\
3     & FC$(1024, 512)$, ReLU, BN     \\
4     & FC$(512, 128)$, ReLU, BN      \\ 
5     & FC$(128, 32)$, tanh      \\ 
\hline
\end{tabular}
\end{minipage}%
\begin{minipage}{.5\textwidth}
\centering
\begin{tabular}{cc}
\multicolumn{2}{c}{\textbf{Instrument Feature Net (32)}}\\
\hline
Layer & Configuration \\ \hline
1     & Input: 3    \\
2     & FC$(3, 256)$, ReLU, BN      \\
3     & FC$(256, 128)$, ReLU, BN     \\
4     & FC$(128, 128)$, ReLU, BN      \\ 
5     & FC$(128, 32)$, ReLU      \\ 
\hline
\end{tabular}
\end{minipage}
\rule{0pt}{.5cm} 
\begin{minipage}{.5\textwidth}
\centering
\begin{tabular}{cc}
\multicolumn{2}{c}{\textbf{Treatment Feature Net (64)}}\\
\hline
Layer & Configuration \\ \hline
1     & Input: 4096      \\
2     & FC$(4096, 1024)$, ReLU, BN      \\
3     & FC$(1024, 256)$, ReLU, BN     \\
4     & FC$(256, 64)$, tanh      \\ 
\hline
\end{tabular}
\end{minipage}%
\begin{minipage}{.5\textwidth}
\centering
\begin{tabular}{cc}
\multicolumn{2}{c}{\textbf{Instrument Feature Net (64)}}\\
\hline
Layer & Configuration \\ \hline
1     & Input: 2352    \\
2     & FC$(2352, 1024)$, ReLU, BN      \\
3     & FC$(1024, 256)$, ReLU, BN     \\
4     & FC$(256, 64)$, ReLU\\
\hline
\end{tabular}
\end{minipage}%
\end{table}

\begin{table}[htb]
\centering
\caption{Network Structure of \algabb for IV on the Demand Design dataset. The tuple for each component represents the input/output dimensions within a particular component. FC refers to the fully-connected layers. BN represents batch normalization. ImageFeature indicates the feature extractor detailed in Table~\ref{tab: ours_imgnet}.}
\label{tab: ours_demand}
\captionsetup{labelformat=empty}
\begin{minipage}{.5\textwidth}
\centering
\begin{tabular}{cc}
\multicolumn{2}{c}{\textbf{Treatment Feature Net (32)}}\\
\hline
Layer & Configuration \\ \hline
1     & Input: 784 (P)      \\
2     & \text{ImageFeature}(P)\\
3     & FC$(1024, 512)$, ReLU, BN      \\
4     & FC$(512, 256)$, ReLU, BN     \\
5     & FC$(256, 32)$, tanh      \\ 
\hline
\end{tabular}
\vspace{6mm} 
\end{minipage}%
\begin{minipage}{.5\textwidth}
\centering
\begin{tabular}{cc}
\multicolumn{2}{c}{\textbf{Observable Feature Net (32)}}\\
\hline
Layer & Configuration \\ \hline
1     & Input: 785 (T, S)      \\
2     & \text{ImageFeature}(S), T\\
3     & FC$(1025, 512)$, ReLU, BN      \\
4     & FC$(512, 256)$, ReLU,BN     \\
5     & FC$(256, 32)$, tanh      \\ 
\hline
\end{tabular}
\vspace{6mm} 
\end{minipage}
\rule{0pt}{.5cm} 
\begin{minipage}{.45\textwidth}
\centering
\begin{tabular}{cc}
\multicolumn{2}{c}{\textbf{Instrument Feature Net (32)}}\\
\hline
Layer & Configuration \\ \hline
1     & Input: 1      \\
2     & FC$(1, 16)$, ReLU, BN      \\
3      & FC$(16, 4)$, ReLU      \\
\hline
\end{tabular}
\end{minipage}
\begin{minipage}{.45\textwidth}
\centering
\begin{tabular}{cc}
\multicolumn{2}{c}{\textbf{Outcome Feature Net (32)}}\\
\hline
Layer & Configuration \\ \hline
1     & Input: 1     \\
2     & FC$(1, 16)$, ReLU, BN      \\
3      & FC$(16, 4)$, ReLU      \\
\hline
\end{tabular}
\end{minipage}

\end{table}

\begin{table}[htb!]
\centering
\caption{Network Structure of \algabbpcl for PCL on the dSprites dataset. The tuple for each component represents the input/output dimensions within a particular component. FC refers to the fully-connected layers. BN represents batch normalization. }
\label{tab: ours_pcl}
\captionsetup{labelformat=empty}
\begin{minipage}{.45\textwidth}
\centering
\begin{tabular}{cc}
\multicolumn{2}{c}{\textbf{Treatment Feature Net}}\\
\hline
Layer & Configuration \\ \hline
1     & Input: 4096      \\
2     & FC$(4096, 1024)$, ReLU, BN      \\
3     & FC$(1024, 512)$, ReLU, BN     \\
4     & FC$(512, 32)$, tanh      \\ 
\hline
\end{tabular}
\vspace{6mm} 
\end{minipage}%
\begin{minipage}{.45\textwidth}
\centering
\begin{tabular}{cc}
\multicolumn{2}{c}{\textbf{Observable Feature Net}}\\
\hline
Layer & Configuration \\ \hline
1     & Input: 4096       \\
2     & FC$(4096, 1024)$, ReLU, BN      \\
3     & FC$(1024, 256)$, ReLU,BN     \\
4     & FC$(256, 32)$, tanh      \\ 
\hline
\end{tabular}
\vspace{6mm}
\end{minipage}
\begin{minipage}{.45\textwidth}
\centering
\begin{tabular}{cc}
\multicolumn{2}{c}{\textbf{Instrument Feature Net}}\\
\hline
Layer & Configuration \\ \hline
1     & Input: 3      \\
2     & FC$(3, 16)$, ReLU, BN      \\
3     & FC$(16, 2)$, ReLU     \\
\hline
\end{tabular}
\end{minipage}
\begin{minipage}{.45\textwidth}
\centering
\begin{tabular}{cc}
\multicolumn{2}{c}{\textbf{Outcome Feature Net}}\\
\hline
Layer & Configuration \\ \hline
1     & Input: 1     \\
2     & FC$(1, 16)$, ReLU, BN      \\
3     & FC$(16, 2)$, ReLU     \\
\hline
\end{tabular}
\end{minipage}
\end{table}
\vspace{6mm}

\section{DIMENSION MAPPING}
\label{app:data_mapping}

We utilize the mapping function in \cite{bennett2019deep} to build the high-dimensional scenarios. Given a low-dimensional input $X_\text{low}\in\RR$, we generate a corresponding high-dimensional variable $X_\text{high}\in\RR^{784}$ via:
\begin{equation} \label{eq:map_to_high_dim}
    X_\text{high} = \text{RandomImage}(\pi(X_\text{low}))
\end{equation}
where $\pi(x)=\text{round}(\min(\max(1.5x+5,0),9))$ transforms the input to an integer within the range 0 to 9; $\text{RandomImage}(d)$ selects a random MNIST image corresponding to the input digit $d$.
\section{DATA GENERATION FOR DSPRITES}
\label{app:data_dsprites}

\subsection{IV Task}

We generate data using the following relationships:
\begin{align*}
    f(X) &= \frac{\|AX\|^2_2-5000}{1000},\\
    Y&= f(X)+32(\texttt{posX}-0.5)+\epsilon,\,\,\epsilon\sim\Ncal(0,0.5).
\end{align*}
where each entry of $A\in \RR^{10\times4096}$ is drawn from $\text{Uniform}(0,1)$. We generate $A$ once initially and maintain it fixed throughout the experiment. 

The relationship above allows us to generate the treatment $X\in\RR^{4096}$ and the instrument $Z\in\RR^{3}$, denoted as the low-dimensional scenario. 
By applying equation \eqref{eq:map_to_high_dim} to each element of $Z$, we can transform these into a high-dimensional scenario, \ie, $Z_\text{high}\in\RR^{2352}$. 

\subsection{PCL Task}
For the PCL setting, we employ the same definition of $f(\cdot)$ and obtain $Y$ through:
\begin{align*}
    Y&= \frac{1}{12}(\texttt{posX}-0.5)f(X)+\epsilon,\,\,\epsilon\sim\Ncal(0,0.5).
\end{align*}
We conduct the structural function estimation experiment on dSprites with treatment in $\RR^{4096}$ and instrument in $\RR^3$.
Following \cite{xu2021dfpv}, we fix the \texttt{shape} as \texttt{heart} and use (\texttt{scale}, \texttt{rotation}, \texttt{posX}) as the treatment-inducing proxy. We then sample another image from the dSprites dataset with the same \texttt{posY} as the output-inducing proxy, \ie, $\text{Image(\texttt{scale}=0.8, \texttt{rotation}=0, \texttt{posX}=0.5, \texttt{posY})}+{\eta}$, with $\eta\sim\Ncal(0,0.1I)$.
\section{DATA GENERATION FOR DEMAND DESIGN}
\label{app:data_demand}

Following \cite{singh2019kernel}, we generate data samples using the following relationships:
\begin{align*}
    Y&=f(P,T,S)+\epsilon\\
    f(P,T,S)&=100+(10+p)Sh(T)-2P\\
    h(t)&=2\rbr{\dfrac{(t-5)^4}{600}+\exp(-4(t-5)^2)+\dfrac{t}{10}-2}\\
\end{align*}
with 
\begin{align*}
        S&\sim\text{Uniform}\{1,\dots,7\}\\
        T&\sim\text{Uniform}[0,10]\\
        C&\sim\Ncal(0,1)\\
        V&\sim\Ncal(0,1)\\
        \epsilon&\sim\Ncal(\rho V, 1-\rho^2)\\
        P&=25+(C+3)h(T)+V
\end{align*}
The mapping function \eqref{eq:map_to_high_dim} is then used to generate high-dimensional versions of the variables $P$ and $S$. This results in $P, S\in \RR^{784}$ and $T,C\in\RR$.
\section{ADDITIONAL EXPERIMENTS}
\paragraph{Unlabeled Data Augmentation.}
\begin{figure}[!htp]
    \centering
    \includegraphics[width=0.7\linewidth]{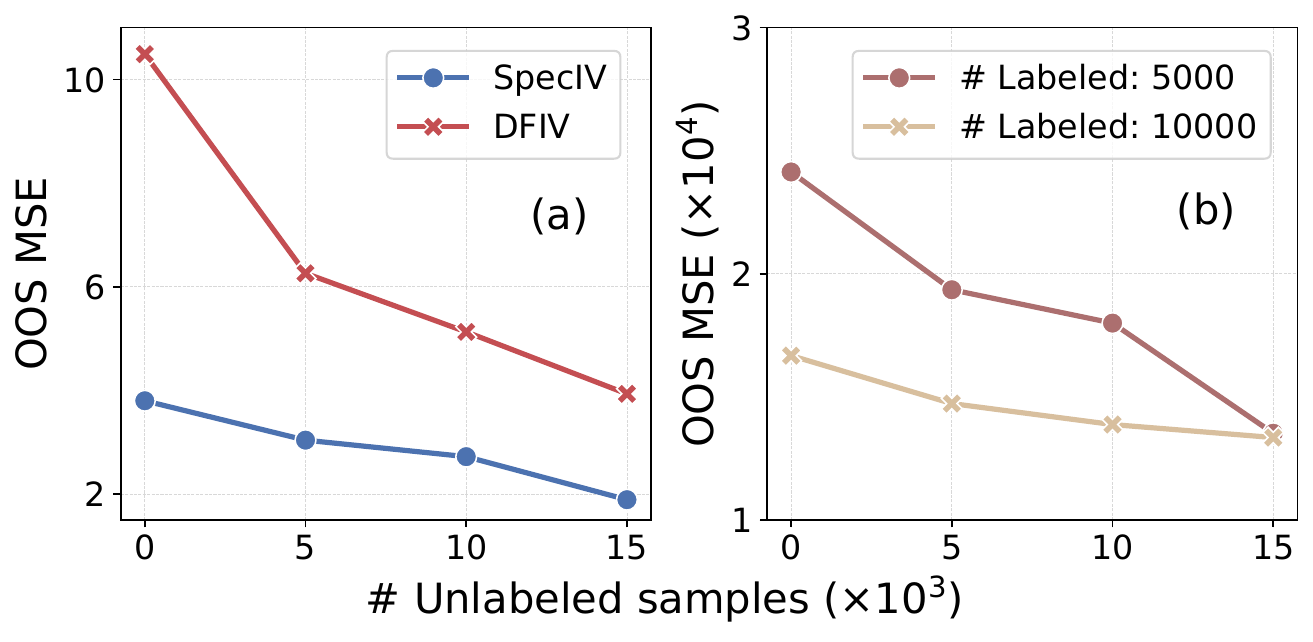}
    \caption{MSE comparison of (a) \algabb and DFIV on dSprites dataset with 5,000 labeled training samples and (b) \algabb on Demand Design Dataset with 5,000 and 10,000 labeled samples, respectively. Each method is trained with the specified amount of labeled data and an additional set of unlabeled samples.}
    \label{fig:add_unlabeled_data}
\end{figure}
In practical scenarios, it's often easier to collect a large amount of unlabeled data than labeled data. For instance, in the Demand Design task, the true ticket demand, which is the ground-truth label, is more challenging to acquire than other readily available information like ticket price and year time.
Therefore, it's crucial to explore how much an IV method can improve by using extra unlabeled data during its training process. Note that for \algabb with observable confounders, the unlabeled samples are used to optimize the decomposition in \eqref{eq:ivoc_factorization} while the original labeled data are employed for both \eqref{eq:ivoc_factorization} and \eqref{eq:ivoc_factorization_y}.
Figure~\ref{fig:add_unlabeled_data} shows that as the number of unlabeled training samples increases (up to 3x labeled data), the MSE for both methods decreases. This trend confirms that adding unlabeled data contributes positively to the model's performance. Notably, Figure~\ref{fig:add_unlabeled_data}a shows that \algabb consistently achieves a lower error rate compared to DFIV. Figure~\ref{fig:add_unlabeled_data}b illustrates that incorporating additional unlabeled data can effectively compensate for the lack of labeled data and enhance model performance.
This demonstrates a promising characteristic for practical applications where unlabeled data is more accessible.

\paragraph{Comparison with DeepIV~\citep{hartford2017deepiv}.}
We compare \algabb with DeepIV~\citep{hartford2017deepiv}, another important baseline, and observe that it exhibits inferior performance for both low- and high-dimensional treatment variables. Specifically, for uni-dimensional treatments, DeepIV is consistently outperformed by DFIV, as reported in Figure 4, Section 4.2, and Appendix E.2 by \cite{xu2020dfiv}. When the treatment variable is high-dimensional, DeepIV struggles to provide meaningful predictions due to the inherent challenges of performing conditional density estimation in high-dimensional spaces. 
To illustrate this, we report the MSE results on the Demand Design dataset, which consists of 10,000 training samples with a treatment dimension of 1,569. Table~\ref{tab:comp_deepiv} validate that with a high-dimensional treatment variable, DeepIV performs significantly worse than \algabb and other baselines.

\begin{table}[h]
    \centering
    \begin{tabular}{lccccc}
        \toprule
        Method & SpecIV & DFIV & DeepIV & KIV & DE \\
        \midrule
        OOS MSE ($\times 10^4$) & 1.67 & 2.40 & 4.31 & 2.85 & 2.89 \\
        \bottomrule
    \end{tabular}
    \caption{MSE comparison across different methods on Demand Design dataset with 10,000 training samples.}
    \label{tab:comp_deepiv}
\end{table}

\end{document}